\newcommand{\loss}{\ell}
\newcommand{\decision}{\hat{\theta}}
 \newcommand{\generr}{\mathcal{E}}
\newcommand\cut[1]{}
\newcommand{\squishlist}{
   \begin{list}{$\bullet$}
    { \setlength{\itemsep}{0pt}      \setlength{\parsep}{3pt}
      \setlength{\topsep}{3pt}       \setlength{\partopsep}{0pt}
      \setlength{\leftmargin}{1.5em} \setlength{\labelwidth}{1em}
      \setlength{\labelsep}{0.5em} } }
\newcommand{\squishlisttwo}{
   \begin{list}{$\bullet$}
    { \setlength{\itemsep}{0pt}    \setlength{\parsep}{0pt}
      \setlength{\topsep}{0pt}     \setlength{\partopsep}{0pt}
      \setlength{\leftmargin}{2em} \setlength{\labelwidth}{1.5em}
      \setlength{\labelsep}{0.5em} } }
\newcommand{\squishend}{
    \end{list}  }
\newtheorem{thm}{Theorem}{}
\newtheorem{prop}{Proposition}{}
{}
{}
\newcommand{\half}{\mbox{$\frac{1}{2}$}}
\newcommand{\sqr}[1]{\left[#1\right]}
\newcommand{\myexpect}{\mathbb{E}}
\newcommand{\calD}{\mbox{${\cal D}$}}
\newcommand{\data}{\calD}
\newcommand{\be}{\begin{equation}}
\newcommand{\ee}{\end{equation}}
\newcommand{\bea}{\begin{eqnarray}}
\newcommand{\eea}{\end{eqnarray}}
\newcommand{\beaa}{\begin{eqnarray*}}
\newcommand{\eeaa}{\end{eqnarray*}}
\DeclareMathOperator*{\argmin}{arg\,min}
\newtheorem{asm}{Assumption}[section]
\title[A Generalization Bound for Online Variational Inference]{A Generalization Bound for Online Variational Inference}
  \author{\Name{Badr-Eddine Ch\'erief-Abdellatif} \Email{badr.eddine.cherief.abdellatif@ensae.fr}\\
  \addr CREST, ENSAE, Institut Polytechnique de Paris
  \AND
  \Name{Pierre Alquier} \Email{pierrealain.alquier@riken.jp}\\
  \addr RIKEN Center for AI Project, Tokyo, Japan
  \AND
  \Name{Mohammad Emtiyaz Khan} \Email{emtiyaz.khan@riken.jp}\\
  \addr RIKEN Center for AI Project, Tokyo, Japan
 }
\begin{document}

\maketitle

\begin{abstract}
Bayesian inference provides an attractive online-learning framework to analyze sequential data, and offers generalization guarantees which hold even with model mismatch and adversaries. Unfortunately, exact Bayesian inference is rarely feasible in practice and approximation methods are usually employed, but do such methods preserve the generalization properties of Bayesian inference ? In this paper, we show that this is indeed the case for some variational inference (VI) algorithms. We consider a few existing online, tempered VI algorithms, as well as a new algorithm, and derive their generalization bounds. Our theoretical result relies on the convexity of the variational objective, but we argue that the result should hold more generally and present empirical evidence in support of this. Our work in this paper presents theoretical justifications in favor of online algorithms relying on approximate Bayesian methods.
\end{abstract}
\begin{keywords}
Bayesian inference, Variational inference, Online learning, Generalization bounds
\end{keywords}

\section{Introduction}
Bayesian methods, such as Kalman Filtering \citep{kalman1960}, Hidden Markov Model \citep{HiddenMarkovModel1966} and Particle Filtering \citep{DoucetParticleFiltering}, are popular methods to analyze sequential data. The posterior distribution provides a natural representation of the past information and can be updated sequentially using the Bayes rule whenever new data is available. Generalizations of Bayesian inference, such as those that \emph{temper} the likelihood, offer good generalization guarantees \citep{banerjee2006bayesian, audibert2009fast,gerchinovitz2013sparsity}. Such bounds hold even when the model is misspecified or when an adversary manipulates the stream of data. These generalization bounds are in fact very similar and sometimes even identical to the ones obtained by online learning methods commonly used in the optimization community \citep{cesa2006prediction}. The Bayesian principle offers a new perspective which can be used to advance online-learning methods used in areas such as convex optimization, machine learning, reinforcement learning, continual learning, and lifelong learning. 

Unfortunately, exact Bayesian inference is computationally challenging in cases where the normalizing constant of the posterior distribution is a high-dimensional integral.
Approximation methods, such as variational inference (VI) \citep{VIJordan1999} and expectation propagation \citep{MinkaEP}, can dramatically reduce the computation cost and enable application of the Bayesian principle to large-scale problems. Despite concerns about their approximation error, these methods have extensively been applied to many machine-learning problems where they show satisfactory performance in practice \citep{Blei2006TopicModels,hoffman2013stochastic, kingma2013auto}.

The practical success of such approximation methods points to the gap between the theory and practice. A few recent works have established generalization bounds of the approximation methods such as variational inference, but these are restricted to the batch or offline setting~\citep{Tempered,Plage,Chicago}. Extending such results to the online setting, without making strong assumption about the model mismatch and adversaries, is the main focus of this paper.

We propose online version of variational inference with tempered likelihoods, and derive new generalization bound, which has very similar form to the bound of exact Bayesian inference. Unlike existing proof techniques, our proof extend to the case when approximations are used instead of the exact Bayesian update. Our derivation relies on the convexity of the variational objective. This covers a few important cases, but can be limiting. We argue that the generalization bound is likely to hold more generally, and present empirical evidence in support of these arguments. Our work takes a step towards establishing the generalization properties of online approximate Bayesian methods.

\subsection{Related works}
Variational inference is extremely popular in statistics and machine learning, yet its theoretical properties are not investigated until recently.
Generalization bounds for generalized versions of variational approximations are derived in~\cite{alquier2016properties,cottet2018}. Similarly, Bernstein-von Mises' theorems for variational approximations in parametric models are proved in~\cite{wang2018frequentist}, while concentration of the posterior in general models is studied in~\cite{Tempered,sheth2017excess,Plage,Chicago,cherief2018consistency,cherief2018consistency2,jaiswal2019asymptotic}. These works show that variational approximations does enjoy the same consistency properties as the posterior distribution under general conditions. All of these results however only apply to the batch setting and their extension to the online setting is not straightforward.

It is known that the Bayesian approach leads to good online predictions for a stream of data; see~\cite{banerjee2006bayesian}, and~\cite{cesa2006prediction,audibert2009fast,gerchinovitz2013sparsity} for generalized posteriors in machine learning. However, there are only a few attempts to study the online properties of variational inference, and the proofs used in~\cite{cesa2006prediction} cannot easily be extended to online variational inference. 

Generalization bounds for online approximations of the posterior are studied in~\cite{dunson2013}, but the algorithms analyzed there are different from the ones used in practice and the feasibility of these algorithms is not proven. Recently \cite{nguyen2017online} give some results, but the order of magnitude of the bounds are not explicitly written and in many contexts it is not clear that the bound will even be small enough to ensure consistency. Even though stochastic/online versions of variational inference are known to give good results in practice~\citep{sato2001online, hoffman2010online, wang2011online,hoffman2013stochastic,khan2017conjugate,nguyen2017variational,khan2018,EmtiForComputations,zeno2018}, existing works have not been able to derive theoretical results confirming their generalization properties. Our results fill this gap between theory and practice for some types of variational approximations obtained with specific types of online algorithms.

\section{Generalization Properties of Bayesian Inference for Online Learning}
\label{section-Bayes}

Given a stream of data, the goal of online learning is to learn to make good decisions, estimations, or predictions on future data examples. 
The quality of such decisions is defined with a loss function $\loss(\data_t,\decision_t)$, denoted by $\loss_t(\decision_t)$ for brevity, where $\data_t$ is the data at time $t$ and $\decision_t$ is a quantity computed using the past data, i.e., $\data_{1:(t-1)} := \{ \data_1,\data_2, \ldots,\data_{t-1} \}$. 
This definition of the loss includes popular supervised and unsupervised learning methods. For example, in maximum-likelihood training of a parameterized model $p_\theta$, $\decision_t$ is the parameter estimate and the loss is $\loss_t(\theta) := -\log p_{\theta}(\data_t)$.
Similarly, for a classification task with input-output pair $\data_t:=(X_t,Y_t)$, the loss could be the hinge loss $\loss_t(\theta)=(1-Y_t f_{\theta}(X_t))_+$ with a classifier $f_\theta$. In the whole paper, we assume that $\theta\mapsto \loss_t(\theta)$ is convex.
By using losses $\loss_t$ until time $t$, our ultimate goal is to find a $\theta_t$ which is as close as possible to the minimizer $\theta^*$ of the generalization error 
$ \generr_*(\theta) = \mathbb{E}_{\mathcal{D}\sim P_*}[\loss(\data,\theta)] $ where $P_*$ is the true distribution of the data. We would want to do this without many strong assumptions such as assuming the data stream to be i.i.d., or the absence of adversaries. 

Since $\generr_*$ is unavailable at time $t$, to ensure the quality of $\decision_t$, online-learning algorithms aim at minimizing the cumulative error $\sum_{i=1}^t \loss_i(\hat{\theta}_t)$ until time $t$. Many algorithms are known with bounds on the \emph{regret} of the decision $\hat{\theta}_t$ , that is the gap in the cumulative error and the minimal cumulative error that could have been reached with a {\it fixed} parameter:
\begin{align}
 \sum_{t=1}^T \loss_t(\hat{\theta}_t) - \inf_{\theta\in\Theta} \sum_{t=1}^T \loss_t(\theta). 
 \end{align}
Bounds on this quantity are known as \emph{regret} bounds, e.g., see~\cite{cesa2006prediction,bubeck,shalev2012online,HazanOnlineConvexOptimization}.
Fortunately, bounding the regret also leads to upper bounds on the generalization gap, e.g., by using the average $\bar{\theta}_T=\frac{1}{T}\sum_{t=1}^T \hat{\theta_t}$ we can bound the gap $\generr_*(\bar{\theta}_T)-\generr_*(\theta^*)$. Due to such properties, regret bounds are useful to study generalization properties of an online algorithm.
Moreover, the bound holds with very little assumptions on the data and is valid when the data is not i.i.d. and even when it is corrupted by an adversary.

For online learning, Bayesian inference algorithms have good generalization properties, e.g.,   
the following \emph{tempered} posterior distribution introduced by~\cite{Vovk:1990:AS:92571.92672,littlestone1994weighted} has a controlled regret: 
\begin{align}
 p_t^{\eta}(\theta) := \frac{1}{\mathcal{Z}_t^\eta} \pi(\theta) e^{-\eta \sum_{i=1}^{t-1} \loss_t(\theta)} 
 \label{eq:temperedBayes}
\end{align}
where $\eta>0$ is a learning rate, $\pi$ is a prior distribution, and $\mathcal{Z}_t^\eta$ is the normalizing constant of the posterior distribution.
Each loss $\loss_t$ here can be interpreted as the log-likelihood of a data example $\data_t$. When the loss is indeed equal to $-\log p_\theta(\data)$ and $\eta=1$, the above algorithm is equivalent to Bayesian inference whose generalization properties are usually established under the assumption of no model mismatch (e.g., see~\cite{ghosal2017fundamentals}). 
The tempered version $\eta<1$ can be shown to generalize well even when the model is misspecified~\citep{grunwaldmisspecifiation} or when an adversary manipulates the stream of data.
Such tempered versions have also been studied in depth in the machine-learning literature by using the PAC-Bayesian bounds~\citep{shawe1997pac,mcallester1999some,MR2483528,seldin2010pac,suzuki2012pac,seldin2011pac,cuong2013generalization,germain2016pac,giulini2017,guedj2019primer,tsuzuku2019normalized}.

In the online-learning literature, the regret bound of this algorithm has been studied extensively under a variety of names, e.g., algorithms such as multiplicative update, weighted majority algorithm, exponentially weighted aggregation (EWA) are all specific cases of tempered Bayesian inference. Algorithm \ref{algo-ewa} shows a pseudo-code for EWA which performs tempered Bayesian inference in an online fashion (Step 3 implements Equation \eqref{eq:temperedBayes}). Below, we state a theorem which shows an example of regret bound\footnote{In online-learning literature such results are usually stated for finite decision space, e.g., see similar results for EWA in~\cite{cesa2006prediction}. The result above holds for a more general continuous setting but under a bounded loss.}, proved in
Theorem 4.6~in \cite{audibert2009fast} for the algorithm shown in Algorithm \ref{algo-ewa}.
\begin{thm}
\label{thm-EWA}
Assuming that the loss is bounded, i.e., $0\leq\loss_t (\theta)\leq B, \,\forall \data_t, \theta$, the cumulative regret has the following upper bound when $\decision_t = \myexpect_{\theta \sim p^\eta_t}[\theta]$ is the posterior mean:
\begin{align}
\label{eq-regret-EWA}
\sum_{t=1}^T \loss_t(\hat{\theta}_t) \leq \inf_{p\in\mathcal{S}} \left\{ \mathbb{E}_{\theta\sim p}\left[\sum_{t=1}^T \loss_t(\theta)\right] + \frac{\eta B^2 T}{8} + \frac{\mathcal{K}(p,\pi)}{\eta} \right\}
\end{align}
where $\mathcal{S}$ is the set of all probability distributions over $\Theta$ and $\mathcal{K}$ is the K\"ullback-Leibler (KL) divergence.
\end{thm}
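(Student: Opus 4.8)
The plan is to prove this regret bound via the classical aggregation argument that tracks the evolution of the log-normalizing constants $\log \mathcal{Z}_t^\eta$ over time, combining three ingredients: the convexity of each loss (assumed throughout the paper), Hoeffding's lemma for bounded random variables, and the Donsker--Varadhan variational representation of the Kullback--Leibler divergence. The whole argument hinges on recognizing that the one-step multiplicative update of the tempered posterior in \eqref{eq:temperedBayes} turns a sum of losses into a telescoping sum of log-partition ratios.

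First I would write the one-step ratio of normalizing constants as an expectation under the current tempered posterior, using that the exponent at time $t+1$ is the exponent at time $t$ times $e^{-\eta \loss_t(\theta)}$:
\begin{align}
\frac{\mathcal{Z}_{t+1}^\eta}{\mathcal{Z}_t^\eta} = \mathbb{E}_{\theta \sim p_t^\eta}\left[ e^{-\eta \loss_t(\theta)} \right].
\end{align}
Since $0 \leq \loss_t(\theta) \leq B$, Hoeffding's lemma bounds the cumulant generating function by $\log \mathbb{E}_{\theta \sim p_t^\eta}[ e^{-\eta \loss_t(\theta)} ] \leq -\eta\, \mathbb{E}_{\theta \sim p_t^\eta}[\loss_t(\theta)] + \eta^2 B^2 / 8$. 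At this point convexity enters through Jensen's inequality: because $\decision_t = \mathbb{E}_{\theta \sim p_t^\eta}[\theta]$ is the posterior mean and $\loss_t$ is convex, we have $\loss_t(\decision_t) \leq \mathbb{E}_{\theta \sim p_t^\eta}[\loss_t(\theta)]$, so that
\begin{align}
\log \frac{\mathcal{Z}_{t+1}^\eta}{\mathcal{Z}_t^\eta} \leq -\eta\, \loss_t(\decision_t) + \frac{\eta^2 B^2}{8}.
\end{align}

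Next I would telescope this inequality over $t = 1, \dots, T$. Using $\mathcal{Z}_1^\eta = 1$ (the empty sum in the exponent), the left-hand side collapses to $\log \mathcal{Z}_{T+1}^\eta$, giving $\eta \sum_{t=1}^T \loss_t(\decision_t) \leq -\log \mathcal{Z}_{T+1}^\eta + \eta^2 B^2 T / 8$. The final step is to upper-bound $-\log \mathcal{Z}_{T+1}^\eta$, where $\mathcal{Z}_{T+1}^\eta = \mathbb{E}_{\theta \sim \pi}[e^{-\eta \sum_{t=1}^T \loss_t(\theta)}]$. Here the Donsker--Varadhan (Gibbs variational) identity converts the log-partition into an infimum over all $p \in \mathcal{S}$:
\begin{align}
-\log \mathcal{Z}_{T+1}^\eta = \inf_{p \in \mathcal{S}} \left\{ \eta\, \mathbb{E}_{\theta \sim p}\left[\sum_{t=1}^T \loss_t(\theta)\right] + \mathcal{K}(p, \pi) \right\}.
\end{align}
Substituting this and dividing by $\eta$ yields exactly the claimed bound.

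The main obstacle is not any single inequality but making the bounded-loss hypothesis pay off precisely at the Hoeffding step while keeping the signs consistent through the telescoping, since the minus sign in the tempered exponent is easy to mishandle. A secondary point of care is the Donsker--Varadhan identity: I only need the ``$\leq$'' direction for the infimum, which follows cleanly from expanding $\mathcal{K}(p, p_{T+1}^\eta) = \mathcal{K}(p, \pi) + \eta\, \mathbb{E}_{\theta \sim p}[\sum_{t=1}^T \loss_t(\theta)] + \log \mathcal{Z}_{T+1}^\eta \geq 0$ and rearranging, so attainment of the supremum by the Gibbs posterior need not be argued separately. I would note that convexity is invoked only at the Jensen step; this is exactly the place where, later in the paper, the posterior mean can be swapped for a variational surrogate while preserving the inequality.
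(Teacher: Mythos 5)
Your proposal is correct and follows essentially the same route as the paper's proof: Jensen's inequality on the posterior mean, Hoeffding's lemma for the bounded loss, telescoping the log-normalizing constants, and the Donsker--Varadhan representation to convert $-\log \mathcal{Z}_{T+1}^\eta$ into the infimum over $\mathcal{S}$. The only cosmetic difference is that you apply Hoeffding before Jensen while the paper nests them in the opposite order inside one chain of inequalities; the argument is identical.
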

\begin{algorithm}[t]
\caption{Tempered Bayesian Inference, a.k.a Exponentially Weighted Aggregration}
\label{algo-ewa}
\begin{description}
\item[Input] Learning rate $\eta>0$, prior $\pi(\theta)$, $p_1^\eta \leftarrow \pi$.
\item[For] $t=1,2,3,\dots$,
\begin{description}
\item[1.] $\hat{\theta}_t \leftarrow \mathbb{E}_{\theta\sim p_t^\eta}(\theta) $,
\item[2.] Observe $\data_t$ to suffer a loss $\loss_t(\hat{\theta}_t)$.
\item[3.] Update $p_{t+1}^\eta(\theta) \propto p_{t}^\eta(\theta) \exp \sqr{-\eta \loss_t(\theta) } $.
\end{description}
\end{description}
\end{algorithm}
A proof is given in Appendix \ref{proofs} for the sake of completeness.

The above regret bound is useful to derive explicit bounds in expectation on the generalization error $\generr_*$ of an estimator that is defined as the average decision $\bar{\theta}_T := \sum_t \decision_t/T$.
For example, we can show that, when a classical prior mass condition\footnote{The exact condition is that the prior $\pi(\theta)$ has mass bigger than $\epsilon^d$ on an $\epsilon$-ball around $\theta^*$ for some $d$.} on the prior is satisfied and when $\data_t$ are actually independent and identically distributed from $P_*$, the generalization error has the following bound:
\begin{align}
\label{eq-generr-EWA}
\myexpect_{\mathcal{D}_{1:T}\sim P_*}\left[ \generr_*(\bar{\theta}_T)\right]
\leq  \generr_*(\theta^*) + B \sqrt{\frac{d}{2T}\log\left(\frac{T}{d}\right)}
\end{align}
for some well-chosen $\eta\sim \sqrt{d/T}$ and $d>0$ is a complexity measure of the parameter space (often the dimension).
This bound shows that when $\data_t$ are i.i.d. from $P^*$ then Bayesian inference achieves generalization error at a rate $\sqrt{d/T}$. An exact statement and a complete proof are given in Theorem~\ref{thm-dim} Subsection~\ref{subsection-otb} in the appendix. The proof is based on a technique called \emph{online-to-batch} analysis. 
Similar bounds can be derived even for the cases when the model is misspecified and an adversary is present.

The regret bound derived in Theorem \ref{thm-EWA} assumes that $p_t^\eta$ is computed exactly, which is extremely challenging and many a times infeasible.
The difficulty arises due to the computation of $\mathcal{Z}_t^\eta$ which is a high-dimensional integral when the space of $\theta$ is large.
Approximate Bayesian inference methods approximate the integral by finding an approximation of $p_t^\eta$ in a restricted family of distributions $\mathcal{F}=\{q_\mu,\mu\in \mathcal{M}\}$, e.g., Gaussian distribution with $\mu$ being the mean and variance.
Our focus in this paper is to derive bounds similar to Theorem \ref{thm-EWA} for approximate Bayesian inference methods.

Unfortunately, deriving similar bounds as Theorem \ref{thm-EWA} for approximate inference is not possible using existing proof techniques. This is because these techniques do not work when $p_t^\eta$ and $\mathcal{S}$ in \eqref{eq-regret-EWA} are replaced by $q_{\mu_t}$ and $\mathcal{M}$ respectively. As shown in Appendix \ref{proofs}, these proofs rely on cancellation of many terms in a telescoping sum. This
cancellation does not take place when an approximation is used instead, and the error accumulates making the regret bound practically useless. In this paper, we solve this problem using a different proof for tempered, online variational inference algorithms discussed in the next section.

\section{Online Variational Inference}
\label{section-onlinevi}
In this section, we introduce approximate Bayesian inference methods that can obtain tractable approximations in an online fashion. The methods available in the approximate inference literature are not always suitable for our purpose. Therefore, we present modifications of those methods that lead to feasible online variants of the Bayesian update shown in \eqref{eq:temperedBayes}. 
%
%
To simplify the notation, we will denote the expectation of the loss under an approximation $q_{\mu}(\theta)$ by $\bar{L}_t(\mu):=\mathbb{E}_{\theta \sim q_\mu}[\loss_t(\theta)]$.

\subsection{Sequential Variational Approximation}
An advantage of variational inference is that it can be directly written as a constrained optimization version of Bayesian inference. To see this we first note that the posterior given in \eqref{eq:temperedBayes} can be obtained by solving the following optimization problem \citep{dai2016provable}:
\begin{align}
p_{t+1}^\eta(\theta) = \argmin_{p\in\mathcal{S}} \left\{  \mathbb{E}_{\theta \sim p} \bigg[ \sum_{i=1}^{t}  \loss_{i}(\theta) \bigg] + \frac{\mathcal{K}(p,\pi)}{\eta} \right\} \nonumber
\end{align}
We can obtain an approximation by simply restricting the set $\mathcal{S}$:
\begin{align}
\label{definitionVB}
 q_{\mu_t} := \argmin_{\mu \in \mathcal{M}} \left\{  \mathbb{E}_{\theta \sim q_\mu} \bigg[ \sum_{i=1}^{t-1}  \loss_i(\theta) \bigg] + \frac{\mathcal{K}(q_\mu,\pi)}{\eta} \right\}
\end{align}
where the set $\mathcal{M}$ is the set of parameters for the set $\mathcal{F}:=\{q_\mu,\mu\in \mathcal{M}\}$. The above approximation therefore is a variational approximation of the exact Bayesian inference.

Unfortunately, the update \eqref{definitionVB} may not be feasible in practice. 
The Bayesian update of \eqref{eq:temperedBayes} takes a convenient form where update of $p_{t+1}^\eta$ can be written in terms of $p_t^\eta$; see line 3~in Algorithm \ref{algo-ewa}. For update \eqref{definitionVB}, this is not possible in most cases, i.e., we cannot express the optimization problem for $q_{\mu_{t+1}}$ in terms of $q_{\mu_t}$. Typically, one need to store all the past data examples $\data_i$ and recompute their gradients, and then run the optimizer until it converges.
This can be very expensive, especially for large $t$.

We propose a sequential version which solves these problems by using an approximation. We follow the ideas used in online gradient algorithms, e.g., such as those used in~\cite{shalev2012online}, and replace $\mathbb{E}_{\theta \sim q_\mu} [\loss_i(\theta)] = \bar{L}_i(\mu) \approx \mu^T \nabla_\mu \bar{L}_i(\mu_i)$. This leads to
\begin{align}
\mu_{t+1} = \argmin_{\mu\in \mathcal{M}} \Biggl[ \sum_{i=1}^{t} \mu^T \nabla_\mu \bar{L}_i(\mu_i) + \frac{\mathcal{K}(q_\mu,\pi)}{\eta} \Biggr].
\label{eq:sva_opt}
\end{align}

Note that the gradients in the approximation are computed at the past $\mu_i$, rather than the current one $\mu_t$. This results in an algorithm summarized in Algorithm
\ref{algo-onlineVB1} which we call sequential variational approximation (SVA). When computing the gradient of the KL divergence term is feasible, this algorithm can be cheaply performed.

\begin{algorithm}[t]
\caption{Online Variational Inference}
\label{algo-onlineVB1}
\begin{description}
\item[Input] Learning rate $\eta>0$, a prior $\pi(\theta) \in \mathcal{F}$, $q_{\mu_1} \leftarrow \pi$.
\item[For] $t=1,2,3,\dots$,
\begin{description}
\item[1.] $\hat{\theta}_{t}\leftarrow\mathbb{E}_{\theta\sim q_{\mu_t}}[\theta]$,
\item[2.] Observe $\data_t$ to suffer a loss $\loss_t(\hat{\theta}_t)$.
\item[3.] Update depending on the type of algorithm.
   \begin{enumerate}
   \item[a)] For SVA, solve \eqref{eq:sva_opt}.
   \item[b)] For SVB, solve \eqref{eq:svb_opt}.
   \item[c)] For NGVI, solve \eqref{eq:ngvi}.
   \end{enumerate}
\end{description}
\end{description}
\end{algorithm}

\subsection{Streaming Variational Bayes}
An alternative approach is to remove the term $\mathcal{K}(q_\mu, \pi)$ since $\pi$ is already included in $q_{\mu_t}$:
\begin{align}
\mu_{t+1}
= \argmin_{\mu\in \mathcal{M} } \Biggl[ \mu^T \nabla_\mu \bar{L}_t(\mu_t) + \frac{\mathcal{K}(q_\mu,q_{\mu_t})}{\eta} \Biggr]. 
\label{eq:svb_opt}
\end{align}
This step, contained in Algorithm \ref{algo-onlineVB1}, is tractable whenever computing the gradient of the KL term is feasible, e.g., when the expectation parameterization is used. This type of update has been proposed in many recent works, e.g., \cite{nguyen2017online}, \cite{zeno2018}. These updates can be seen as a special case of \cite{StreamingVB}. Due to this connection, we call this algorithm streaming variational Bayes (SVB).

\subsection{Natural Gradient Variational Inference}
The algorithm described in the previous sections are closely related to existing natural-gradient variational inference (NGVI) algorithm~\citep{sato2001online, hoffman2013stochastic, khan2017conjugate}. These algorithms are typically applied for \emph{stochastic} learning but can be easily modified for online setting. We will consider the method of \cite{khan2017conjugate} because it applies to the most general setting (other methods require strong \emph{conjugacy} assumptions on the loss $\loss_t(\theta)$ and prior $\pi(\theta)$). The NGVI algorithm is typically applied to obtain exponential-family approximations, but as we will show the updates are similar to our SVA algorithm which also reveals a more general way of implementing these algorithms in the online setting.

The advantage of using NGVI for online learning is that it obtains closed-form updates for $q_{\mu_{t+1}}$ which can be expressed in terms of $q_{\mu_t}$. This is done by exploiting the expectation parameterization\footnote{Expectation parameters are expectations of the sufficient statistics, e.g., Gaussian approximation has two expectation parameters: mean vector and correlation matrix respectively.} of the exponential family.
Throughout this section, we denote the expectation parameter by $\mu$ and natural parameterization of the exponential family by $\lambda$. 
\cite{khan2017conjugate} propose the following update\footnote{The exact update proposed in \cite{khan2017conjugate} is written differently but can be shown to be equivalent to \eqref{eq:ngvi}. This can be done by using their Lemma 1 and setting $1/\alpha:= 1/\beta- 1/\eta$ where $\beta$ is the step-size used in their paper. We use this form since it makes it easier to establish connections to SVA.} in the expectation-parameter space:
\begin{align}
\min_{\mu\in \mathcal{M}} \Biggl[ \mu^T \nabla_\mu \bar{L}_t(\mu_t) + \frac{\mathcal{K}(q_\mu,\pi)}{\eta} + \frac{\mathcal{K}(q_\mu,q_{\mu_t})}{\alpha} \Biggr],
\label{eq:ngvi}
\end{align}
where $\alpha>0$ is a step size.
The difference from \eqref{eq:sva_opt} is that now the linear term does not contain a sum over all past examples $i$, rather only the current one.
Instead, we add another KL divergence term which contains the past information in the previous approximation $q_{\mu_t}$.
Therefore, NGVI algorithm, summarized in Algorithm \ref{algo-onlineVB1}, employs a different way to add the past information, but as we show next, it results in a very similar update as SVA. In the appendix, we provide a closed-form solution to~\eqref{eq:ngvi}.

\subsection{Example: Mean-Field Gaussian VI}
We now give a concrete example of the algorithms introduced in this section.
We will use the mean-field Gaussian VI where $\mathcal{F}$ is the class of all Gaussian approximations with diagonal covariance matrix. We denote the mean vector of the Gaussian by $m=(m_1,\dots,m_d)^T$ and the diagonal of the covariance matrix by $\sigma^2=(\sigma_1^2,\dots,\sigma_d^2)^T$. To derive the updates for SVA and SVB, we used $\mu = \{m, \sigma\}$ while for NGVI we used the expectation parameters $\mu = \{m, m^2 + \sigma^2\}$. (Here, and until~\eqref{algo-log} below, the squares and multiplications on vectors are to be understood componentwise). We also assume the prior $\pi(\theta)$ to be a Gaussian with mean 0 and variance $s^2 I_d$ where $I_d$ is the identity $d\times d$ matrix.

Denoting the gradients $\bar{g}_{m_t} := \frac{\partial \bar{L}_t}{\partial m}$ and $\bar{g}_{\sigma_t} := \frac{\partial \bar{L}_t}{\partial \sigma}$, we give the update for each method below (here $h(x):=\sqrt{1+x^2}-x$, applied componentwise for vector inputs):
\begin{align}
\textrm{SVA:} \,\, m_{t+1} & \leftarrow m_{t} - \eta s^2 \bar{g}_{m_t} , \quad\quad g_{t+1} \leftarrow g_t + \bar{g}_{\sigma_t} \nonumber,
\\
\sigma_{t+1} & \leftarrow  h\left(\half\eta s g_{t+1}\right) s , \\
\textrm{SVB:} \,\, m_{t+1} &\leftarrow m_{t} - \eta \sigma_t^2  \bar{g}_{m_t} , \nonumber \\
\sigma_{t+1} &\leftarrow \sigma_t h\left( \half\eta \sigma_t \bar{g}_{\sigma_t} \right) . \label{algo-log} 
\end{align}


\section{Generalization Bounds for Online VI}
\label{section-bounds}
In this section, we present regret bounds for online VI algorithms discussed in the previous section. Our bounds take similar form to the one presented in Theorem \ref{thm-EWA}, and can be used to obtain generazation bounds similar to \eqref{eq-generr-EWA}. Our proofs require convexity of $\bar{L}_t(\mu) := \myexpect_{q_\mu} [\loss_t(\theta)]$ with respect to $\mu$, which is a strong assumption. Due to this we are able to derive bounds for SVA and SVB. We expect our bound to hold for NGVI too, due to its similarity to
SVA. Specifically, all of our results use the following minimal assumption.
\begin{asm}
\label{asm-l} $\bar{L}_t$ is $L$-Lipschitz and convex.
\end{asm}
Some results require the following stronger assumption.
\begin{asm}
\label{asm-s-c}
$\bar{L}_t$ is $H$-strongly convex where $H>0$, i.e., for any two $\mu,\mu'\in\mathcal{M}$, the following holds:
$$ 
\bar{L}_t(\mu') - \bar{L}_t(\mu) \geq (\mu'-\mu)^T \nabla \bar{L}_t(\mu) + \frac{H}{2} \|\mu'-\mu\|^2.
$$
\end{asm}
Finally, some results also require strong convexity for KL.
\begin{asm}
\label{asm-k}
The KL divergence $\mu\mapsto \mathcal{K}(q_\mu,q_{\mu_1})$ is $\alpha$-strongly convex.
\end{asm}
All of these assumption depend heavily on the parametrization of $\{q_\mu,\mu\in M\}$. For some parameterization, these assumptions do hold although such cases are limited. For example, for Gaussian approximations and convex $\loss$, the assumptions are satisfied, as pointed out by \cite{challis2013gaussian}. This result has recently been extended by~\cite{ConvexDomke} to more generals \emph{location-scale} family. We give a formal statement
below.
\begin{prop}[Theorem 1 in~\cite{ConvexDomke}]
 \label{prop-cvx}
Assuming that $q_\mu$ belongs to a location-scale family $\mathcal{F} = \left\{q_{m,C} \right\} $ where $m$ is a d-length vector and $C$ is a $d\times d$ matrix with $q_{m,C}(\theta) = [{\rm det}(C)]^{-1/2} \psi(C^{-1/2}(\theta-m)) $
 for some fixed density $\psi$, then $\bar{L}_t$ is convex. Moreover when each $\theta \mapsto \loss_t(\theta) $ is $H$-strongly convex and $\psi$ is the density of a centered random variable with identity variance matrix, then Assumption~\ref{asm-s-c} is also satisfied.
\end{prop}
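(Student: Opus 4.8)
The plan is to prove the statement via the reparameterization trick, which converts the parameter-dependence of the integrand into an affine map so that convexity of $\loss_t$ passes through the expectation. First I would apply the change of variables $\theta = m + C^{1/2}\epsilon$ with $\epsilon\sim\psi$, rewriting the objective as
\[
\bar{L}_t(\mu) = \mathbb{E}_{\epsilon\sim\psi}\bigl[\loss_t(m + C^{1/2}\epsilon)\bigr],
\]
where I treat the location $m$ and the scale factor $A := C^{1/2}$ as the variational parameters $\mu = (m,A)$; this is the parameterization in which the claim should be read, since convexity holds with respect to the scale matrix rather than the covariance $C$ (the map $A\mapsto A^2$ is not affine). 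The key structural observation is that for each fixed $\epsilon$ the map $(m,A)\mapsto m + A\epsilon$ is affine in $(m,A)$.

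Given this, convexity is almost immediate: for fixed $\epsilon$, $\loss_t(m + A\epsilon)$ is the composition of the convex $\loss_t$ with an affine function of $(m,A)$, hence convex in $(m,A)$; and since an expectation of convex functions is convex, $\bar{L}_t(\mu)$ is convex. The only care needed is measure-theoretic: interchanging expectation with the convexity inequality is harmless, but in order to later differentiate $\bar{L}_t$ I would invoke dominated convergence, using the Lipschitz hypothesis on $\loss_t$ to justify differentiation under the integral sign.

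For the strong-convexity half, I would start from the pointwise $H$-strong convexity inequality applied to $\theta = m + A\epsilon$ and $\theta' = m' + A'\epsilon$, namely $\loss_t(\theta') - \loss_t(\theta) \geq \nabla\loss_t(\theta)^\top(\theta'-\theta) + \tfrac{H}{2}\|\theta'-\theta\|^2$, and then take $\mathbb{E}_\epsilon$. Two computations finish the argument. Since $\theta'-\theta = (m'-m) + (A'-A)\epsilon$ and $\psi$ is centered with identity covariance, the cross term vanishes and
\[
\mathbb{E}_\epsilon\bigl[\|\theta'-\theta\|^2\bigr] = \|m'-m\|^2 + \mathbb{E}_\epsilon\bigl[\|(A'-A)\epsilon\|^2\bigr] = \|m'-m\|^2 + \|A'-A\|_F^2 = \|\mu'-\mu\|^2 .
\]
Next I would check that the expected first-order term reassembles into the gradient of $\bar{L}_t$: using $\nabla_m\bar{L}_t = \mathbb{E}_\epsilon[\nabla\loss_t(\theta)]$ and $\nabla_A\bar{L}_t = \mathbb{E}_\epsilon[\nabla\loss_t(\theta)\epsilon^\top]$, one gets $\mathbb{E}_\epsilon[\nabla\loss_t(\theta)^\top(\theta'-\theta)] = \nabla\bar{L}_t(\mu)^\top(\mu'-\mu)$. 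Combining yields $\bar{L}_t(\mu') - \bar{L}_t(\mu) \geq \nabla\bar{L}_t(\mu)^\top(\mu'-\mu) + \tfrac{H}{2}\|\mu'-\mu\|^2$, which is exactly Assumption~\ref{asm-s-c}.

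I expect the main obstacle to be conceptual rather than computational: identifying the correct parameterization so that the reparameterized sample is affine in the parameters (the scale matrix $C^{1/2}$, paired with the Euclidean/Frobenius norm on $(m,A)$), since the naive guess of convexity in the covariance $C$ fails. The second delicate point is the precise role of the two hypotheses on $\psi$ — centeredness kills the cross term, and identity covariance turns $\mathbb{E}_\epsilon[\|(A'-A)\epsilon\|^2]$ into $\|A'-A\|_F^2$ with exactly the constant $H$; weakening either would corrupt the strong-convexity constant or break the clean identity. The remaining regularity (differentiation under the integral, integrability of $\nabla\loss_t(\theta)$) is routine given the Lipschitz assumption.
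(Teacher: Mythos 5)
Your argument is correct and is essentially the proof of the cited result: the paper does not prove Proposition~\ref{prop-cvx} itself but defers to Theorem~1 of Domke (2019), whose proof is exactly this reparameterization argument --- write $\theta = m + A\epsilon$ with $\epsilon\sim\psi$ and $A$ the scale factor, note that $(m,A)\mapsto m+A\epsilon$ is affine, push the (strong) convexity of $\ell_t$ through the expectation, and use centeredness to kill the cross term and the identity covariance to identify $\mathbb{E}_\epsilon\big[\|(A'-A)\epsilon\|^2\big]$ with $\|A'-A\|_F^2$. Your observation that the convexity must be read in the scale parameterization $(m,C^{1/2})$ rather than in the covariance $C$ is the correct reading of the statement and is consistent with the $C^TC$ parameterization used in Proposition~\ref{prop-lipschitz}; the only loose end is that you invoke a Lipschitz hypothesis on $\ell_t$ to differentiate under the integral, which is not among the stated assumptions (a subgradient version of the same argument avoids this).
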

The results for Gaussian approximation can be obtained as a special case.
\begin{prop}
 \label{prop-lipschitz}
 Assume that $\theta \mapsto \loss_t(\theta) $ is $L'$-Lipschitz. Assume that we use the Gaussian approximation family $\mathcal{F} = \left\{q_{m,C}= \mathcal{N}(m,C^T C),(m,C)\in M  \right\} $, $M\subset  \mathbb{R}^d \times UT(d)$ where $UT(d)$ is the set of full-rank upper triangular $d\times d$ real matrices. Then $\bar{L}_t$ is $L$-Lipschitz with $L=2L'$.
\end{prop}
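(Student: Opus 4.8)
The plan is to exploit the reparameterization trick so as to write $\bar{L}_t$ as an expectation over a \emph{fixed} standard Gaussian, and then push the Lipschitz estimate for $\loss_t$ through the expectation. Since $q_{m,C} = \mathcal{N}(m, C^T C)$, drawing $\vepsilon \sim \mathcal{N}(0, I_d)$ gives the distributional identity $\theta = m + C^T \vepsilon$ (note that $C^T C$ is exactly a Cholesky-type factorization when $C$ is full-rank upper triangular, so $C^T\vepsilon$ reproduces the covariance), hence
$$\bar{L}_t(m,C) = \myexpect_{\vepsilon \sim \mathcal{N}(0,I_d)}\!\left[\loss_t(m + C^T \vepsilon)\right].$$
The point of this representation is that the randomness no longer depends on $(m,C)$, so comparing $\bar{L}_t$ at two parameter values reduces to comparing $\loss_t$ along the \emph{same} sample $\vepsilon$.

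First I would fix $(m,C)$ and $(m',C')$, move the absolute value inside the expectation by Jensen's inequality, and invoke the $L'$-Lipschitz property of $\loss_t$ pointwise in $\vepsilon$:
$$\left|\bar{L}_t(m,C) - \bar{L}_t(m',C')\right| \leq L'\, \myexpect_{\vepsilon}\!\left[\big\|(m-m') + (C-C')^T \vepsilon\big\|\right].$$
By the triangle inequality this splits into a mean part $\|m - m'\|$ and a scale part $\myexpect_{\vepsilon}\big[\|(C-C')^T\vepsilon\|\big]$.

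Second, I would control the scale part. Applying Jensen's inequality to the concave square root and using $\myexpect[\vepsilon\vepsilon^T] = I_d$,
$$\myexpect_{\vepsilon}\!\left[\|(C-C')^T\vepsilon\|\right] \leq \sqrt{\myexpect_{\vepsilon}\!\left[\|(C-C')^T\vepsilon\|^2\right]} = \sqrt{\trace\!\big((C-C')(C-C')^T\big)} = \|C - C'\|_F.$$
Combining the two parts yields $|\bar{L}_t(m,C) - \bar{L}_t(m',C')| \leq L'\big(\|m-m'\| + \|C-C'\|_F\big)$, and bounding each of $\|m - m'\|$ and $\|C - C'\|_F$ by the joint norm of $(m,C)-(m',C')$ gives the claimed constant $L = 2L'$.

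I do not expect a genuine obstacle: this is a routine consequence of the reparameterization trick. The only points that need care are (i) selecting the matrix square root so that $C^T\vepsilon$ matches the covariance $C^T C$, and (ii) the final, slightly lossy step where I bound $\|m-m'\| + \|C-C'\|_F \leq 2\,\|(m,C)-(m',C')\|$ rather than using the sharper $\sqrt{2}$ factor; it is precisely this crude bound that produces $L = 2L'$.
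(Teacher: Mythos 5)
Your proposal is correct and follows essentially the same route as the paper's proof: reparameterize via a standard Gaussian, apply Jensen and the pointwise Lipschitz bound, split by the triangle inequality, and control the scale term by the Frobenius norm via Cauchy--Schwarz, ending with the same (slightly lossy) factor $2$. The only cosmetic difference is that you write $\theta = m + C^T\vepsilon$ while the paper writes $m + Cu$; this does not affect the Frobenius-norm computation or the conclusion.
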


Finally, we remind the formula for the KL divergence between two Gaussian distributions. Let $q_{m,C}=\mathcal{N}(m,C^T C)$ for any $(m,C)\in\mathbb{R}^d \times UT(d)$. Then
\begin{equation*}
 \mathcal{K}(q_{m,C},q_{\bar{m},\bar{C}})
  = \frac{1}{2}  \bigg( (m-\bar{m})^T \bar{C}^T \bar{C} (m-\bar{m})
  + {\rm tr}[(\bar{C}^T\bar{C})^{-1} (C^T C)] 
  + \log\left(\frac{{\rm det}(\bar{C}^T\bar{C})}{{\rm det}(C^T C)}\right) -d \bigg)
\end{equation*}
is known to be strongly convex on $\mathbb{R}^d \times \mathcal{M}_C$ where $\mathcal{M}_C$ is a closed bounded subset of $UT(d)$. Thus, Assumption~\ref{asm-k} is satisfied with a Gaussian prior and a Gaussian approximation family.

We are now ready to state our regret bounds for SVA and SVB.

\subsection{Bounds for SVA}

\begin{thm}
\label{thm1}
Under Assumptions~\ref{asm-l} and~\ref{asm-k}, SVA has the following regret bound: 
\begin{align}
\sum_{t=1}^T \loss_t(\hat{\theta}_t) \leq \inf_{\mu\in \mathcal{M}} \Biggl\{\mathbb{E}_{\theta\sim q_\mu}\left[ \sum_{t=1}^T \loss_t(\theta)\right]  + \frac{ \eta L^2 T}{\alpha} + \frac{\mathcal{K}(q_\mu,\pi)}{\eta} \Biggr\}.
\end{align}
\end{thm}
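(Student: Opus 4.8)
The plan is to recognize the SVA update \eqref{eq:sva_opt} as an instance of Follow-The-Regularized-Leader (FTRL) run on the \emph{linearized} losses $\mu \mapsto \langle \mu, g_t\rangle$ with $g_t := \nabla_\mu \bar{L}_t(\mu_t)$ and regularizer $R(\mu) := \mathcal{K}(q_\mu,\pi)/\eta$, and then to transport a standard FTRL regret bound back to the true losses by using convexity twice. First I would peel off the decision via Jensen's inequality: since $\theta \mapsto \loss_t(\theta)$ is convex and $\decision_t = \myexpect_{q_{\mu_t}}[\theta]$, we have $\loss_t(\decision_t) \leq \myexpect_{q_{\mu_t}}[\loss_t(\theta)] = \bar{L}_t(\mu_t)$, so it suffices to control $\sum_t \bar{L}_t(\mu_t)$. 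Next, convexity of $\bar{L}_t$ (Assumption~\ref{asm-l}) gives $\bar{L}_t(\mu_t) - \bar{L}_t(\mu) \leq \langle \mu_t - \mu, g_t\rangle$ for every competitor $\mu \in \mathcal{M}$, reducing the problem to bounding the linearized regret $\sum_t \langle \mu_t - \mu, g_t\rangle$.

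For the linearized regret I would run the usual two-part FTRL argument. Applying the ``be-the-leader'' lemma to the sequence of functions $R, \langle \cdot, g_1\rangle, \dots, \langle \cdot, g_T\rangle$ (whose running minimizers are exactly the iterates $\mu_2, \dots, \mu_{T+1}$, with $\mu_1 = \argmin R$) yields $\sum_t \langle \mu_{t+1}, g_t\rangle + R(\mu_1) \leq \sum_t \langle \mu, g_t\rangle + R(\mu)$; that is, the regret of the one-step-lookahead iterate $\mu_{t+1}$ is at most $R(\mu) - R(\mu_1)$. Since $q_{\mu_1} = \pi$ (and $\pi \in \mathcal{F}$), we have $R(\mu_1) = \mathcal{K}(\pi,\pi)/\eta = 0$, the global minimum of $R \geq 0$. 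It then remains only to pay for the lookahead, $\sum_t \langle \mu_t - \mu_{t+1}, g_t\rangle$, via a stability estimate.

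This stability estimate is the crux. Assumption~\ref{asm-k} makes $R$ be $(\alpha/\eta)$-strongly convex, so comparing the first-order optimality conditions of the objectives minimized at steps $t$ and $t+1$ (which differ only by the single linear term $\langle \cdot, g_t\rangle$) gives the self-bounding inequality $\langle \mu_t - \mu_{t+1}, g_t\rangle \geq (\alpha/\eta)\|\mu_{t+1} - \mu_t\|^2$. Combining this with Cauchy--Schwarz and the Lipschitz bound $\|g_t\|_* \leq L$ (which follows from Assumption~\ref{asm-l}) yields $\|\mu_{t+1} - \mu_t\| \leq \eta L/\alpha$, and hence $\langle \mu_t - \mu_{t+1}, g_t\rangle \leq \eta L^2/\alpha$ per round. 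Summing over $t$ contributes at most $\eta L^2 T/\alpha$, so the linearized regret against any $\mu$ is bounded by $\mathcal{K}(q_\mu,\pi)/\eta + \eta L^2 T/\alpha$.

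Chaining back through the two convexity steps gives $\sum_t \loss_t(\decision_t) \leq \myexpect_{q_\mu}[\sum_t \loss_t(\theta)] + \eta L^2 T/\alpha + \mathcal{K}(q_\mu,\pi)/\eta$ for each fixed $\mu$, and taking the infimum over $\mu \in \mathcal{M}$ produces the claimed bound. The step I expect to be the main obstacle is the stability estimate $\|\mu_{t+1}-\mu_t\| \leq \eta L/\alpha$: it is where both assumptions are genuinely used, and care is needed because the $\argmin$ in \eqref{eq:sva_opt} ranges over the constrained set $\mathcal{M}$, so the optimality conditions are variational inequalities rather than $\nabla = 0$. One must check that the strong-convexity comparison survives the constraint (it does, since the projection-type first-order conditions still cancel the common terms) and that $\|g_t\|_* \leq L$ indeed follows from $\bar{L}_t$ being $L$-Lipschitz with respect to the norm in which Assumption~\ref{asm-k} is stated.
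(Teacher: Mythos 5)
Your proposal is correct and follows essentially the same route as the paper's proof: Jensen plus convexity of $\bar{L}_t$ to reduce to the linearized regret, the be-the-leader induction to pay only $\mathcal{K}(q_\mu,\pi)/\eta$, and the strong-convexity stability estimate $\|\mu_{t+1}-\mu_t\|\leq \eta L/\alpha$ (which the paper also obtains by summing the two strong-convexity inequalities at the constrained minimizers $\mu_t$ and $\mu_{t+1}$, exactly the variational-inequality point you flag). Nothing essential differs.
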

The above bound is almost identical to the bound given in Theorem~\ref{thm-EWA} where we can replace $p$ by $q_{\mu}$, $\mathcal{S}$ by $\mathcal{M}$, the bound $B$ by the Lipschitz constant $L$, and factor of 8 by the strong convexity parameter $\alpha$. However, our proof of Theorem~\ref{thm1} is completely different from the one for Theorem~\ref{thm-EWA}. It relies on arguments from online convex optimization that can be found
in~\cite{shalev2012online,HazanOnlineConvexOptimization}. A detailed proof is given in Appendix \ref{proofs}.

Similar to the Bayesian update case discussed in Section \ref{section-Bayes}, using the online-to-batch analysis detailed in Appendix~\ref{subsection-otb}, we can show that the average $\bar{\theta}_T=(1/T)\sum_{t=1}^T \hat{\theta}_t$ satisfies
\begin{equation}
\myexpect_{\mathcal{D}_{1:T}\sim P_*} [ \generr_*(\bar{\theta}_T)]
\leq \inf_{\mu\in \mathcal{M}} \Biggl\{ \mathbb{E}_{\theta\sim q_\mu} [\mathcal{E}_*(\theta)]  + \frac{ \eta L^2 }{\alpha} + \frac{\mathcal{K}(q_\mu,\pi)}{\eta T} \Biggr\}.
\end{equation}
As an example consider the mean-field Gaussian approximation and assume that for any $\mathcal{D}$, $\loss(\mathcal{D},\cdot)$ is $L/2$-Lipschitz (note that these are the assumptions of Proposition~\ref{prop-lipschitz} ensuring that Assumption~\ref{asm-l} is satisfied). Then $\mathbb{E}_{\theta\sim q_\mu} [\mathcal{E}_*(\theta)] = \mathcal{E}_*(m) + {\|\sigma\| L}/{2}$ .
Therefore, given the expression of the KL-divergence between Gaussian distributions, taking a vector $\sigma$ with $\sigma_j=L\eta/(\alpha\sqrt{d})$, $\eta=(1/L)\sqrt{\alpha d \log(T/d) /T}$, and considering only the regret with respect to bounded means $m$ leads to
$$
\myexpect_{\mathcal{D}_{1:T}\sim P_*} [ \generr_*(\bar{\theta}_T)] \leq
\inf_{m\in [-\bar{M},\bar{M}]^d} \mathcal{E}_*(m) + (1+o(1))\frac{2L}{\alpha} \sqrt{ \frac{d\log \left( dT\right)}{T} }.
$$
This again is very similar to the generalization error shown in \eqref{eq-generr-EWA}.

\subsection{Bounds for SVB}
Similarly to the SVA case, we can derive a regret bound, however our proof only applies to the Gaussian case. For this case, we require a dynamic learning $\eta_t$. We use a different learning rate for each element of $\theta_j$ which we denote by $\eta_{t,j}$. The result also works for a bounded parameter space $\mathcal{M}=\mathcal{M}_m\times \mathcal{M}_\sigma$ that will imply a projection step in addition to the update in~\eqref{algo-log}:
\begin{align*}
\textrm{SVB:} \,\, m_{t+1} & \leftarrow \Pi_{\mathcal{M}_m} \left[m_{t} - \eta \sigma_t^2  \bar{g}_{m_t}\right] , \nonumber \\
\sigma_{t+1} &\leftarrow  \Pi_{\mathcal{M}_\sigma} \left[ \sigma_t h\left( \half\eta \sigma_t \bar{g}_{\sigma_t} \right) \right].
\end{align*}
where $\Pi_{\mathcal{M}_m}$ and $\Pi_{\mathcal{M}_\sigma}$ denote the orthogonal projection on $\mathcal{M}_m$ and $\mathcal{M}_\sigma$ respectively. The following theorem states the result.
\begin{thm}
\label{thm3}
We consider the mean-field Gaussian family $q_\mu = \mathcal{N}(m,{\rm diag}(\sigma^2))$ and $\mathcal{M}=\mathcal{M}_m\times \mathcal{M}_\sigma $ where $\mathcal{M}_m$ and $\mathcal{M}_\sigma$ are closed, bounded, convex subsets of $\mathbb{R}^d$ and $\mathbb{R}^d_+$ respectively, and $0\in \mathcal{M}_{\sigma}$. Define $
D^2 = \sup \left\{ \|m-m'\|_2^2 + \|\sigma\|^2 , m,m' \in \mathcal{M}_m, \sigma \in \mathcal{M}_\sigma\right\}
$.
Then, under Assumption~\ref{asm-l}, with the choice $\eta_{t,j}=\frac{D\sqrt{2}}{L} \frac{1}{\sqrt{t}\sigma_{t,j}^2}$ we get:
\begin{align}
\sum_{t=1}^T \loss_t(\hat{\theta}_t)
\leq \inf_{\theta\in \mathcal{M}_m}  \sum_{t=1}^T \loss_t(\theta) + DL \sqrt{2T} .
\end{align}
Under Assumptions~\ref{asm-l} and \ref{asm-s-c}, the choice $\eta_t = 2/H t\sigma_t^2$ leads to:
\begin{align}
\sum_{t=1}^T \loss_t(\hat{\theta}_t)
\leq \inf_{\theta \in \mathcal{M}_m}  \sum_{t=1}^T \loss_t(\theta)  + \frac{L^2(1+\log T)}{H}.
\end{align}
\end{thm}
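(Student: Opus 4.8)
The plan is to cast the algorithm as projected online (sub)gradient descent in the joint variable $\mu=(m,\sigma)$ and to compare the iterates against the degenerate comparator $\mu^\star=(\theta,0)$, which is feasible precisely because $0\in\mathcal{M}_\sigma$. Since $\hat{\theta}_t=\mathbb{E}_{\theta\sim q_{\mu_t}}[\theta]=m_t$, Jensen's inequality (convexity of $\loss_t$) gives $\loss_t(m_t)\le \bar{L}_t(\mu_t)$, while $\bar{L}_t(\theta,0)=\loss_t(\theta)$ because a zero-variance Gaussian is a point mass. First I would use convexity of $\bar{L}_t$ (Assumption~\ref{asm-l}, guaranteed e.g. by Proposition~\ref{prop-cvx}) to linearize,
\begin{equation*}
\loss_t(m_t)-\loss_t(\theta)\le \bar{L}_t(\mu_t)-\bar{L}_t(\mu^\star)\le \bar{g}_{m_t}^T(m_t-\theta)+\bar{g}_{\sigma_t}^T\sigma_t ,
\end{equation*}
so that summing over $t$ reduces the theorem to bounding the two linearized sums separately. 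The whole role of the coordinatewise rate $\eta_{t,j}=\tfrac{D\sqrt2}{L}\tfrac{1}{\sqrt t\,\sigma_{t,j}^2}$ is that it makes the \emph{effective} step on the mean equal to $\gamma_t:=\eta_{t,j}\sigma_{t,j}^2=\tfrac{D\sqrt2}{L\sqrt t}$, uniformly in $j$.

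For the mean sum I would run the textbook projected-OGD telescoping: nonexpansiveness of $\Pi_{\mathcal{M}_m}$ gives $\bar{g}_{m_t}^T(m_t-\theta)\le\frac{\|m_t-\theta\|^2-\|m_{t+1}-\theta\|^2}{2\gamma_t}+\frac{\gamma_t}{2}\|\bar{g}_{m_t}\|^2$, and summing with $\|\bar{g}_{m_t}\|\le L$, $\|m_t-\theta\|^2\le D^2$, and a standard rearrangement of the $\gamma_t^{-1}$ increments yields an $O(LD\sqrt T)$ contribution; the schedule $\gamma_t\propto 1/\sqrt t$ is exactly what balances the two pieces. Under Assumption~\ref{asm-s-c} I would instead carry the extra $-\tfrac H2\|m_t-\theta\|^2$ term coming from strong convexity of $\bar{L}_t$ and use the harmonic effective step $\gamma_t=2/(Ht)$ (the role of $\eta_t=2/(Ht\sigma_t^2)$), which telescopes the distance terms against $\sum_t 1/t$ and produces the $\tfrac{L^2}{H}(1+\log T)$ rate.

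The delicate part is the variance sum $\sum_t\bar{g}_{\sigma_t}^T\sigma_t$. Two structural facts drive it. First, convexity of each $\loss_t$ forces $\bar{g}_{\sigma_t,j}=\sigma_{t,j}\,\mathbb{E}_{q_{\mu_t}}[\partial_j^2\loss_t]\ge0$, so $\sigma_{t,j}$ is monotonically nonincreasing. Second, writing $x_j:=\tfrac12\eta_{t,j}\sigma_{t,j}\bar{g}_{\sigma_t,j}\ge0$ and using $h(x)=\sqrt{1+x^2}-x$, the update $\sigma_{t+1,j}=\sigma_{t,j}h(x_j)$ obeys the exact identities $\sigma_{t,j}^2-\sigma_{t+1,j}^2=2\sigma_{t,j}^2 x_j h(x_j)$ and $1/h(x_j)-h(x_j)=2x_j$, which let me rewrite $\bar{g}_{\sigma_t,j}\sigma_{t,j}$ as (a constant times) $\sqrt t$ times the decrement $\sigma_{t,j}^2-\sigma_{t+1,j}^2$, modulo the factor $\sigma_{t,j}/\sigma_{t+1,j}$. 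A summation-by-parts (Abel) argument then bounds $\sum_t \sqrt t\,(\sigma_{t,j}^2-\sigma_{t+1,j}^2)\le \sigma_{1,j}^2\sqrt T$, since $\sigma_{t,j}^2$ decreases while $\sqrt t$ increases; summing over $j$ against $\|\sigma_1\|^2\le D^2$ gives the matching $O(LD\sqrt T)$ term.

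The main obstacle will be this variance term: the multiplicative $h$-update does \emph{not} satisfy the Euclidean one-step inequality of ordinary OGD — one checks that $h(x)>1-x$ for every $x>0$, so $\sigma$ contracts strictly less than a Euclidean gradient step would — and hence cannot be dominated coordinatewise by the mean-style telescoping. The argument must instead pass through the exact $h$-identity together with the monotonicity $\sigma_{t+1,j}\le\sigma_{t,j}$, and the real danger is controlling the ratio $\sigma_{t,j}/\sigma_{t+1,j}=1/h(x_j)$ (equivalently, keeping $x_j$ bounded) when $\sigma_{t,j}$ becomes small; here the Lipschitz bound $\bar{g}_{\sigma_t,j}\le L$, the explicit $1/\sqrt t$ schedule, and the boundedness of $\mathcal{M}_\sigma$ are what I would combine to close the estimate. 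Assembling the mean and variance contributions and substituting $\gamma_t$ (respectively the harmonic step) then yields $DL\sqrt{2T}$ and $\tfrac{L^2(1+\log T)}{H}$.
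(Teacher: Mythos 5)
Your overall architecture coincides with the paper's: reduce to the degenerate comparator $\mu^\star=(\theta,0)$ via Jensen (using $0\in\mathcal{M}_\sigma$), linearize by convexity of $\bar{L}_t$, derive per-coordinate identities from the update formulas, and telescope using the fact that the effective step $\eta_{t,j}\sigma_{t,j}^2$ is independent of $j$, with $D^2$ and $L^2$ closing the sums and $\sum_t t^{-1/2}\le 2\sqrt{T}$ (resp.\ the harmonic step under strong convexity) giving the stated rates. The mean-term and strongly convex parts of your plan are sound and match the paper's proof.

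The genuine gap is exactly where you flag it: the variance term. Your route expresses $\bar{g}_{\sigma_t,j}\sigma_{t,j}$ as $\frac{\sqrt{t}}{K}\,(\sigma_{t,j}^2-\sigma_{t+1,j}^2)\,\frac{\sigma_{t,j}}{\sigma_{t+1,j}}$ via the identities $1-h(x)^2=2xh(x)$ and $1/h(x)-h(x)=2x$, and then needs the ratio $\sigma_{t,j}/\sigma_{t+1,j}=1/h(x_j)=\sqrt{1+x_j^2}+x_j$ to be controlled. But $x_j=\tfrac12\eta_{t,j}\sigma_{t,j}\bar{g}_{\sigma_t,j}=K\bar{g}_{\sigma_t,j}/(2\sqrt{t}\,\sigma_{t,j})$, and since $\sigma_{t,j}$ is nonincreasing and $\mathcal{M}_\sigma$ contains $0$, there is no lower bound on $\sigma_{t,j}$; hence $x_j$, and with it the ratio, can be arbitrarily large. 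The Lipschitz bound, the $1/\sqrt{t}$ schedule and the boundedness of $\mathcal{M}_\sigma$ do not rescue this, so the Abel-summation bound $\sum_t\sqrt{t}(\sigma_{t,j}^2-\sigma_{t+1,j}^2)\le\sigma_{1,j}^2\sqrt{T}$ does not apply to the quantity you actually need. The fix is to choose the other exact rearrangement of the squared update $\sigma_{t+1,j}^2=\sigma_{t,j}^2h(x_j)^2$, namely
\begin{equation*}
\frac{\partial\bar{L}_t}{\partial\sigma_j}(m_t,\sigma_t)\,\sigma_{t,j}
=\frac{\sigma_{t,j}^2-\sigma_{t+1,j}^2}{\eta_{t,j}\sigma_{t,j}^2\sqrt{1+x_j^2}}
+\frac{\eta_{t,j}\sigma_{t,j}^2}{2\sqrt{1+x_j^2}}\left(\frac{\partial\bar{L}_t}{\partial\sigma_j}(m_t,\sigma_t)\right)^{2},
\end{equation*}
which is what the paper uses: here the dangerous factor is $1/\sqrt{1+x_j^2}\le 1$ (valid as an upper bound since $\bar{g}_{\sigma_t,j}\ge0$ makes the decrement nonnegative, as you observed), uniformly in how small $\sigma_{t,j}$ becomes, after which the variance term telescopes jointly with the mean term against $D^2$ exactly as in your plan. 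With that substitution your argument closes and reproduces the paper's proof.
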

Here again the results are similar to the Bayesian inference case but now expressed in terms of the parameters $\mu$ instead of expectations.

A similar bound on the generalization error can also be proved. Define $\bar{\theta}_T=(1/T)\sum_{t=1}^T \hat{\theta}_t$. Here, the online-to-batch analysis directly leads to
$$
\myexpect_{\mathcal{D}_{1:T}\sim P_*} [ \generr_*(\bar{\theta}_T)]
\leq \inf_{\theta\in \mathcal{M}_m} \mathcal{E}_*(\theta)  + \frac{DL\sqrt{2}}{\sqrt{T}}
$$
in the convex case and
$$
\myexpect_{\mathcal{D}_{1:T}\sim P_*} [ \generr_*(\bar{\theta}_T)]
\leq \inf_{\theta\in \mathcal{M}_m} \mathcal{E}_*(\theta)  +  \frac{L^2(1+\log T)}{HT}
$$
in the strongly convex case.

Note the in the online optimization setting studied in~\cite{shalev2012online}, it is usual to optimize on Euclidean balls. Here, $M_m=\{m\in\mathbb{R}^d:\|m\|\leq \bar{M}\}$ and $M_\sigma=\{\sigma\in\mathbb{R}^d_+ :\|\sigma\|\leq \bar{S}\}$ leads to $D=4 \bar{M}^2 + \bar{S}^2$ leads to dimension-free bounds.

On the other hand, the choice  $M_m=[-\bar{M},\bar{M}]^d$ and $M_\sigma = [0,\bar{S}]^d$ implies $D^2=d(4 \bar{M}^2 + \bar{S}^2)$, and so the bound in the convex case is
\begin{equation*}
\myexpect_{\mathcal{D}_{1:T}\sim P_*} [ \generr_*(\bar{\theta}_T)]
\leq \inf_{\theta\in \mathcal{M}_m} \mathcal{E}_*(\theta)  + \frac{L\sqrt{2d(4 \bar{M}^2 + \bar{S}^2)}}{\sqrt{T}}
\end{equation*}
and its dependence in $d$ is the same as in the bound on SVA.

\subsection{Generalization}

We expect our bounds to hold for NGVI as well. When expectation parameterization is used, the assumptions are satisfied only in very limited models. This is because the result of Proposition \ref{prop-cvx} and \ref{prop-lipschitz} do not directly apply to expectation parameterization. However, the NGVI update shown in \eqref{eq:ngvi} can be applied in other parameterization as well, in which case some of our result can be extended to NGVI too.

\section{Experiments}
\label{section-expe}

In this section, we conduct experiments on real and simulated datasets, in classification and linear/nonlinear regression. The objective is twofold: check the convergence of SVA/SVB, with and without the convexity assumption on $\bar{L}_t$, and compare SVA, NGVI and SVB.

\subsection{Experimental setup}

We compare the empirical performance of the algorithms we present in this paper through classification and regression tasks on several toy and real-world datasets. We also include the classical online gradient descent and the online gradient descent on the expected loss as benchmarks. Please refer to Appendix \ref{Appendix:OGA-EL} for more details on these algorithms. In the following, OGA will stand for the classical online gradient descent while OGA-EL for the OGA on the expected loss (Algorithm \ref{algo-onlineVB2}). We recall that SVA, NGVI and SVB respectively refer to the sequential variational approximation (\ref{eq:sva_opt}), natural gradient variational inference (\ref{eq:ngvi}) and streaming variational Bayes (\ref{eq:svb_opt}).

\textbf{Binary classification} 
We consider first a classification problem. At each round $t$ the learner receives a data point $x_t \in \mathbb{R}^d$ and predicts its label $y_t\in\{-1,+1\}$ using $\left<x_t,\theta_t\right>$. The adversary reveals the true value $y_t$, then the learner suffers the loss $\loss_t(\theta_t) = (1-y_t\theta_t^Tx_t)_+ $,
where $a_+=a$ if $a>0$ and $a_+=0$ otherwise. 

\textbf{Regression} 
At each round $t$, the learner receives a set of features $x_t \in \mathbb{R}^d$ and predicts $y_t\in\mathbb{R}$ using $\left<x_t,\theta_t\right>$. Then the adversary reveals the true value $y_t$ and the learner suffers the loss $\loss_t(\theta_t) = (y_t-f_{\theta_t}(x_t))^2$. We will consider both the linear case when the predictions are linear $f_\theta(x_t)=\theta^Tx_t$ and the nonlinear case where the predictions are outputs of a one-hidden-layer neural network with a ReLU activation. The first case of linear predictions leads to a convex loss with respect to $\theta$, while the latter leads to a nonconvex loss.

\textbf{Variational family} For both tasks, we use a Gaussian mean-field variational family $ \mathcal{F} = \{ q_\mu = \mathcal{N}\left(m,{\rm diag}(\sigma^2)\right) / \mu=(m,\sigma) \in M_m \times M_\sigma \}$, $M_m=[-20,20]^d$ and $M_\sigma=[0,1]^d$.

\textbf{Datasets} We consider here six different datasets: one toy and three real datasets for classification, and one real world dataset for both linear and nonlinear regression. The three real world datasets used for the binary classification problem are the popular Breast Cancer, the Pima Indians and the Forest Cover Type datasets, while those used for regression are the Boston Housing and the California Housing datasets respectively for the convex and the nonconvex case. All come from the UCI machine learning repository. Note that in some databases, the data are ordered according to some criterion such as the date or the label. In order to avoid any effect linked to this, we randomly permuted the observations.

\begin{figure}[!t]
\centering
  \includegraphics[scale=0.35]{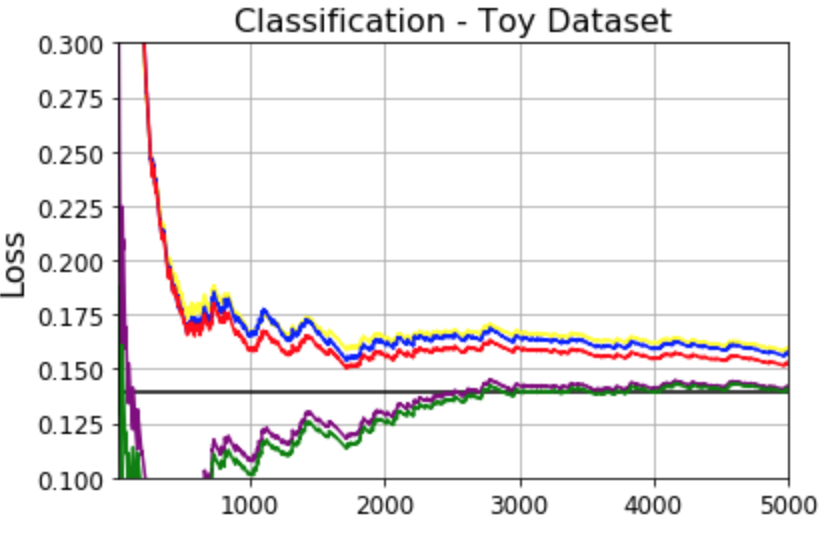}
  \includegraphics[scale=0.35]{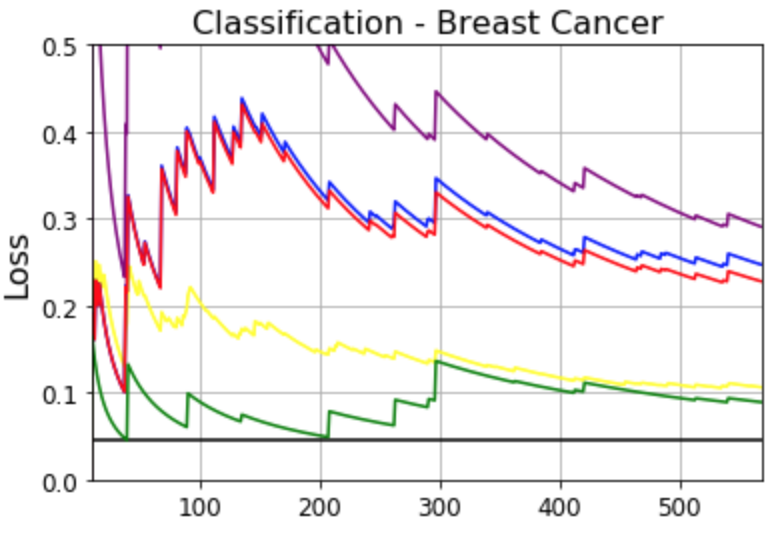}
  \includegraphics[scale=0.35]{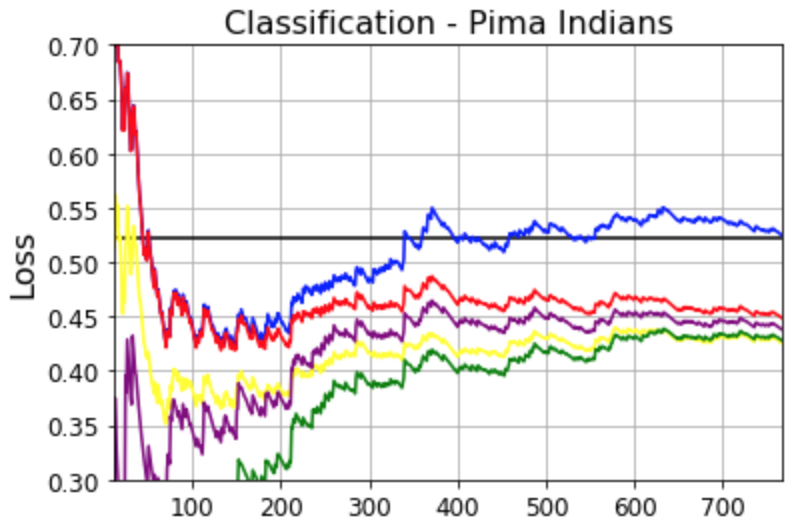}
  \includegraphics[scale=0.349]{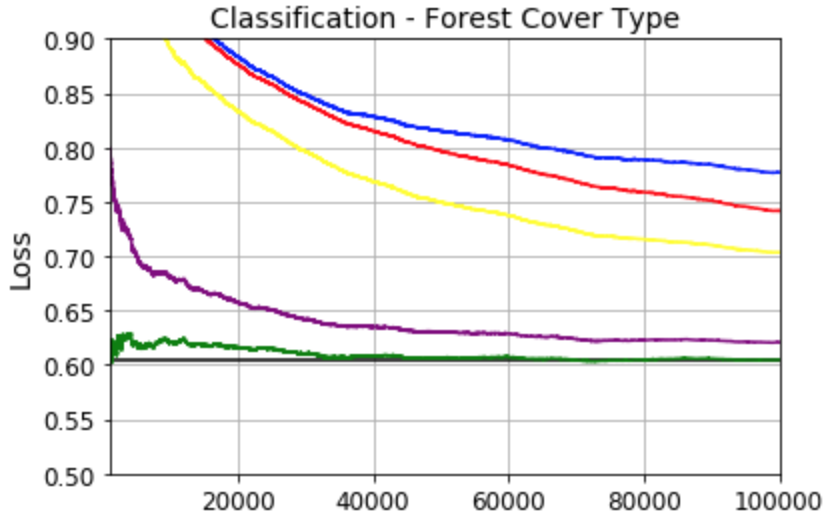}
  \includegraphics[scale=0.349]{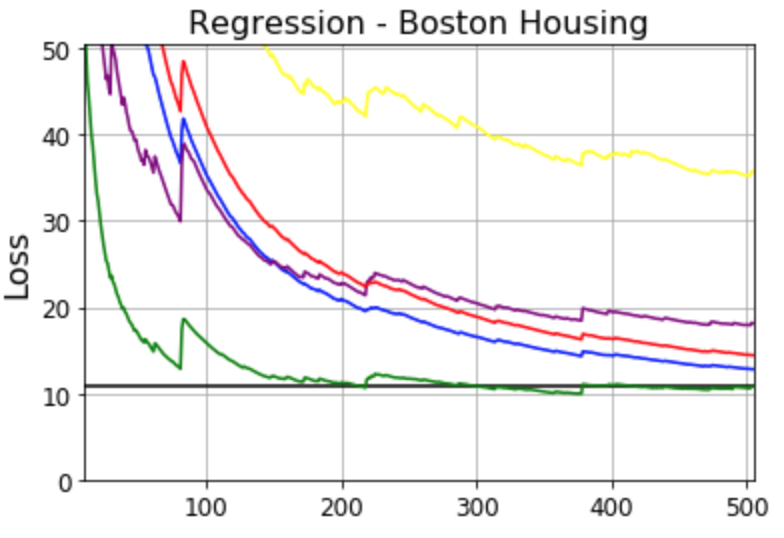}
  \includegraphics[scale=0.349]{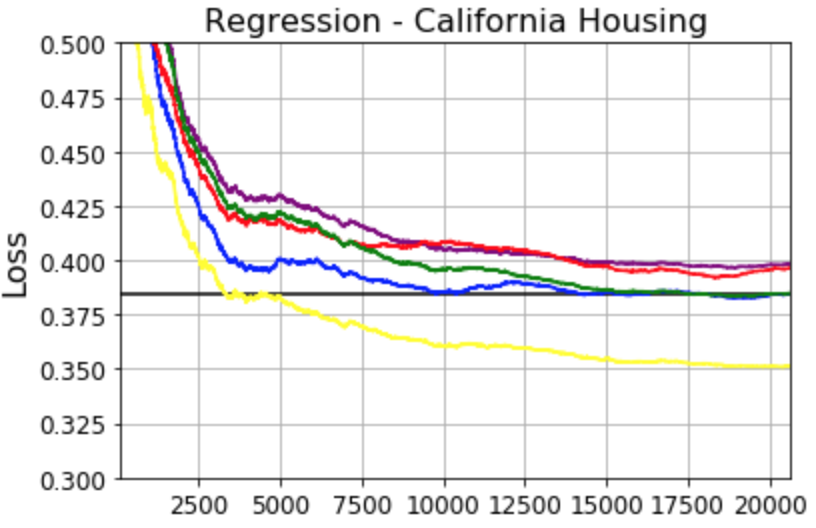}
  \vspace{-0.7cm}
\caption{Average cumulative losses on different datasets for classification and regression tasks with OGA (yellow), OGA-EL (red), SVA (blue), SVB (purple) and NGVI (green) for the convex hinge loss and the squared loss functions. The black line shows the average total cumulative loss in hindsight. We see that in most cases NGVI outperforms the other algorithms. The last plot (California Housing dataset) shows the consistency of our algorithms for a nonconvex loss $\bar{L}_t$.}
\vspace{-0.7cm}
\label{figures}
\end{figure}

\noindent The toy dataset is as follows: we sample $n=10^4$ points $y_t$ according to a Bernoulli distribution $\mathcal{B}e(2/3)$. Then
$$
x_t|(y_t=1) \sim \mathcal{N}
\left(
\left(
\begin{array}{c}
 1
 \\
 1
\end{array}
\right)
,
\left(
\begin{array}{c c}
 1 & 1
 \\
 1 & 3
\end{array}
\right)
\right) \text{ and }
x_t|(y_t=0) \sim \mathcal{N}
\left(
\left(
\begin{array}{c}
 -1
 \\
 -1
\end{array}
\right)
,
\left(
\begin{array}{c c}
 1 & 0
 \\
 0 & 1
\end{array}
\right)
\right).
$$

\begin{center}
\begin{tabular}{|l|c|r||l|c|r|}
  \hline
  Dataset & $T$ & $d$ & Dataset & $T$ & $d$ \\
  \hline
  Toy classification & 10000 & 2 & Cover Type & 581012 & 54 \\
  Breast cancer & 569 & 30 &   Boston Housing & 506 & 13 \\
  Pima Indians & 768 & 8 &   California Housing & 20640 & 9 \\
  \hline
\end{tabular}
\end{center}

\subsection{Experimental results}

For each task and each dataset, we plot the evolution of the average cumulative loss $ \sum_{i=1}^t \loss_i(\theta_i)/t$ as a function of the step $t=1,...,T$, where $T$ is the number of instances of the dataset and $\theta_i$ is the decision made by the learner at step $i$. We compare this quantity to the best average total cumulative loss in hindsight $\inf_{\theta \in M_m}  \frac{1}{T} \sum_{t=1}^T \loss_t(\theta)$
which is represented by a straight black horizontal line in Figure \ref{figures}.

\textbf{Parameters setting} We initialize all means to $0$ and all values of the variance to $1$. For simplicity, the values of the learning rates are set to $\eta=1/\sqrt{T}$ for OGA, OGA-EL and SVA while $\eta_t=1/\sigma_{t}^2\sqrt{t}$ for SVB and $\eta_t = 1$ for NGVI respectively. It is possible to optimize the values of the step sizes. Nevertheless, we draw attention to the fact that a simple cross validation technique would not be valid here as it would require to know the whole dataset before selecting the step size, which is not possible in an online setting, and using such a strategy at each step $t$ using the past data would change the learning rate of OGA, OGA-EL and SVA at each step.

\textbf{Conclusions}
The results are reported in Figure \ref{figures} that shows the consistency of our algorithms. The goal of our simulations is to observe the empirical performance of our algorithms in practice, and to see if it is possible to go further than the convexity assumption that is required in Section \ref{section-bounds}. 
Looking at the plots, the two main findings of our experiments are the following:
\begin{itemize}
\vspace{-0.2cm}
    \item the generalization properties of online variational inference seem to go beyond the convex assumption we stated in the previous theoretical parts.
\vspace{-0.2cm}
    \item even though SVA and SVB exhibit good performances, NGVI is the best method in practice as it converges faster on all the datasets.
\end{itemize}


\section{Conclusion}
In this paper, we derive the first generalization bounds for some online variational inference algorithms. Our proof techniques applies to cases where existing methods do not work.
By using existing variational methods, we proposed a few online methods for variational inference. We provided generalization bounds for the SVA algorithm, and related them to the NGVI methods. We also derived a bound for a special case of SVB. We provided numerical results to establish consistency of our results. We observed that NGVI outperforms all the other methods, and that the theoretical convexity assumption is not needed in practice.

We believe that it is possible to extend our proof techniques to NGVI case. Currently, our proofs strongly rely on the convexity of $\mathbb{E}_{\theta\sim q_\mu}[\ell_t(\theta)]$ with respect to $\mu$. This analysis cannot directly be used for the parameterization of~\cite{khan2017conjugate}. However, it can be applied to a general formulation where our assumptions hold. We believe that generalization bounds for NGVI is possible to derive and will pursue this direction in the future.

\bibliographystyle{apalike}

\newpage

\section*{Appendix}

\subsection{Closed-form solutions for NGVI}

The expectation parameterization of NGVI enables closed-form solution. This is because the gradient of the KL diverence with respect to expectation parameter is available in closed-form (see Eq. 10 in \cite{khan2018}). The closed update is given in Eq. (50) in \cite{khan2017conjugate} using which we obtain the following update:
\begin{align}
\lambda_{t+1} = (1-\beta)\lambda_t + \beta \lambda_1 - \eta\beta \nabla_\mu \bar{L}_t(\mu_t),
\label{eq:ngvi_recursive}
\end{align}
where $1/\beta := 1/\alpha + 1/\eta$.
Given $\lambda_{t+1}$, we can get $\mu_{t+1} = \nabla_\lambda A(\lambda_{t+1})$ where $A$ is the log-partition function of the exponential family.

Now we show that this closed-form update is similar to SVA. By using induction similar to Lemma 4~in \cite{khan2017conjugate}, we can write the update in terms of all past gradients:
\begin{align}
\lambda_{t+1} = \lambda_1 - \eta \sum_{i=1}^t w_i \nabla_\mu \bar{L}_i(\mu_i) 
\end{align}
where $w_i := \beta \prod (1-\beta)^{i-2}$. This can be compared to the SVA update in the expectation parameterization where applying the gradient to \eqref{eq:sva_opt} gives us the following update similar to \eqref{eq:ngvi_recursive} but where $w_i =1$ for all $i$:
\begin{align}
\lambda_{t+1} = \lambda_1 - \eta \sum_{i=1}^t   \nabla_\mu \bar{L}_i(\mu_i)
\end{align}
Therefore, SVA takes a gradient step assuming that all gradients are equally important, which is similar to the Bayesian update \eqref{eq:temperedBayes} where all loss $\loss_i$ are treated equally. In contrast, in NGVI, the past gradients are discounted using $\beta$ and ultimately forgotten. Weighting past gradients makes sense when we do not want the current mistakes to affect the future. However, the choice of step-size is crucial to know the rate at which the past gradients should be discounted.

NGVI is typically applied using expectation parameterization, but the formulation \eqref{eq:ngvi} is more general although could be computationally difficult. The theoretical results in the paper further assume that $\bar{L}_i$ is convex in $\mu$. Still, in our experiments, NGVI gives good performance in an online setting compared to many other algorithms.

\subsection{Online gradient algorithm on the expected loss (OGA-EL)}
\label{Appendix:OGA-EL}

It is possible to directly use the online gradient algorithm (OGA) on the expected loss $\mathbb{E}_{\theta\sim q_\mu}[\loss_t(\theta)]$, see Algorithm~\ref{algo-onlineVB2}.

\begin{algorithm}
\caption{OGA-EL}
\label{algo-onlineVB2}
\begin{description}
\item[Input] Learning rate $\eta>0$, a prior $\pi(\theta) \in \mathcal{F}$, $q_{\mu_1} \leftarrow \pi$.
\item[Loop] For $t=1,\dots$,
\begin{description}
\item[1.] $\hat{\theta}_{t}\leftarrow\mathbb{E}_{\theta\sim q_{\mu_t}}[\theta]$,
\item[2.] Observe $\data_t$ to suffer a loss $\loss_t(\hat{\theta}_t)$.
\item[3.] Update $ \mu_{t+1} = \mu_{t} -\eta \nabla \bar{L}_t(\mu_t)  .$
\end{description}
\end{description}
\end{algorithm}
Note first that from~\cite{shalev2012online} step (iii) is actually equivalent to
$$ \mu_{t+1} 
= \argmin_{\mu\in M} \Biggl[ \sum_{i=1}^{t} \mu^T \nabla \bar{L}_i(\mu_i)
+ \frac{\|\mu-\mu_1\|^2}{\eta} \Biggr] ,$$
which means that we replaced the K\"ullback-Leibler divergence by the Euclidean norm in SVA.

Also, when $\mu=(m,\sigma)\in\mathbb{R}^d \times (\mathbb{R}_+)^d $ and $q_\mu = \mathcal{N}(m,{\rm diag}(\sigma))$, then Algorithm \ref{algo-onlineVB2} becomes
\begin{align*}
& m_{t+1} = m_{t} - \eta s^2 \frac{\partial \bar{L}_t}{\partial m}(m_t,\sigma_t) , \\
& \sigma_{t+1} = \sigma_t - \eta s^2 \frac{\partial \bar{L}_t}{\partial \sigma}(m_t,\sigma_t).
\end{align*}

We have regret bounds for this method, similar to the one for EWA:
\begin{thm}
\label{thm2}
Under Assumption~\ref{asm-l}, Algorithm~\ref{algo-onlineVB2} leads to:
\begin{equation*}
\sum_{t=1}^T \loss_{t}(\hat{\theta}_t)
\leq \inf_{\mu\in M} \Biggl\{\mathbb{E}_{\theta\sim q_\mu}\left[ \sum_{t=1}^T  \loss_t(\theta) \right] + \eta L^2 T + \frac{\|\mu-\mu_1\|^2}{\eta} \Biggr\},
\end{equation*}
and moreover, under Assumptions~\ref{asm-k} and~\ref{asm-l}, Algorithm~\ref{algo-onlineVB2} leads to:
\begin{equation*}
\sum_{t=1}^T \loss_{t}(\hat{\theta}_t)
\leq \inf_{\mu\in M} \Biggl\{\mathbb{E}_{\theta\sim q_\mu}\left[ \sum_{t=1}^T  \loss_t(\theta) \right] + \eta L^2 T + \frac{\alpha \mathcal{K}(q_\mu,\pi)}{2 \eta} \Biggr\}.
\end{equation*}
\end{thm}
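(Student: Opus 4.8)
The plan is to reduce the regret of the \emph{point} decisions $\hat{\theta}_t$ to the regret of the \emph{expected-loss} iterates $\mu_t$, and then run the textbook online gradient descent (OGD) analysis on the sequence $\bar{L}_t$. The starting observation is that Algorithm~\ref{algo-onlineVB2} is exactly OGD applied to the functions $\mu\mapsto\bar{L}_t(\mu)$ with constant step size $\eta$ and no projection, so any comparator in $M$ (indeed in the whole parameter space) is admissible. Since $\theta\mapsto\loss_t(\theta)$ is convex and $\hat{\theta}_t=\mathbb{E}_{\theta\sim q_{\mu_t}}[\theta]$, Jensen's inequality gives $\loss_t(\hat{\theta}_t)=\loss_t(\mathbb{E}_{\theta\sim q_{\mu_t}}[\theta])\le\mathbb{E}_{\theta\sim q_{\mu_t}}[\loss_t(\theta)]=\bar{L}_t(\mu_t)$, so it suffices to bound $\sum_t\bar{L}_t(\mu_t)$; summing and recalling $\sum_t\bar{L}_t(\mu)=\mathbb{E}_{\theta\sim q_\mu}[\sum_t\loss_t(\theta)]$ will reproduce the right-hand sides.

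For the OGD step I would expand the squared distance to an arbitrary comparator $\mu\in M$,
\[
\|\mu_{t+1}-\mu\|^2=\|\mu_t-\mu\|^2-2\eta\,\nabla\bar{L}_t(\mu_t)^T(\mu_t-\mu)+\eta^2\|\nabla\bar{L}_t(\mu_t)\|^2,
\]
and invoke convexity of $\bar{L}_t$ (Assumption~\ref{asm-l}) in the form $\bar{L}_t(\mu_t)-\bar{L}_t(\mu)\le\nabla\bar{L}_t(\mu_t)^T(\mu_t-\mu)$. The $L$-Lipschitz half of Assumption~\ref{asm-l} bounds $\|\nabla\bar{L}_t(\mu_t)\|\le L$. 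Rearranging and summing telescopes the distance terms, leaving $\sum_t[\bar{L}_t(\mu_t)-\bar{L}_t(\mu)]\le\frac{\|\mu_1-\mu\|^2}{2\eta}+\frac{\eta L^2T}{2}$, which is in turn at most $\frac{\|\mu-\mu_1\|^2}{\eta}+\eta L^2T$. Combined with the Jensen step and the infimum over $\mu\in M$, this gives the first inequality.

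For the second inequality only the treatment of the $\|\mu-\mu_1\|^2$ term changes, now using Assumption~\ref{asm-k}. Since $q_{\mu_1}=\pi$, we have $\mathcal{K}(q_\mu,\pi)=\mathcal{K}(q_\mu,q_{\mu_1})$, and this map attains its minimum value $0$ at $\mu=\mu_1$, where its gradient also vanishes. Strong convexity then yields a quadratic lower bound with no first-order term, $\mathcal{K}(q_\mu,q_{\mu_1})\ge\frac{\alpha}{2}\|\mu-\mu_1\|^2$, so that $\|\mu-\mu_1\|^2$ can be replaced by a constant multiple of $\mathcal{K}(q_\mu,\pi)$ inside the regret bound, producing the KL-form right-hand side.

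The main obstacle is conceptual rather than computational: the regret is stated in terms of the true loss $\loss_t(\hat{\theta}_t)$ of the single point $\hat{\theta}_t$, whereas the algorithm only ever manipulates the smoothed objective $\bar{L}_t$; the Jensen inequality in the first step is precisely what bridges this gap, and it is the reason the decision is taken to be the posterior mean. The remaining care is bookkeeping — checking that $\|\nabla\bar{L}_t\|\le L$ follows from Lipschitzness, that the telescoping is valid without a projection step, and that the vanishing gradient of the KL at $\mu_1$ makes the strong-convexity lower bound first-order-free.
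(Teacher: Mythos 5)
Your proof is correct but follows a genuinely different route from the paper's. The paper treats the gradient step of Algorithm~\ref{algo-onlineVB2} as a Follow-The-Regularized-Leader update, $\mu_{t+1}=\argmin_{\mu}\bigl[\sum_{i\le t}\mu^T\nabla\bar{L}_i(\mu_i)+\|\mu-\mu_1\|^2/\eta\bigr]$, and then reruns the proof of Theorem~\ref{thm1}: an induction establishing the ``be-the-leader'' inequality, followed by a stability bound $\mu_t^T\nabla\bar{L}_t(\mu_t)-\mu_{t+1}^T\nabla\bar{L}_t(\mu_t)\le\eta L^2$ obtained from the $1/\eta$-strong convexity of the regularized objective. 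You instead run the classical online-gradient-descent potential argument, expanding $\|\mu_{t+1}-\mu\|^2$ and telescoping. Both are standard and both rest on the same two ingredients (Jensen to pass from $\loss_t(\hat\theta_t)$ to $\bar L_t(\mu_t)$, then convexity plus $\|\nabla\bar L_t\|\le L$); your route is more self-contained, avoids the induction, and in fact yields the sharper constant $\frac{\|\mu-\mu_1\|^2}{2\eta}+\frac{\eta L^2T}{2}$, which you correctly relax to match the statement. It also sidesteps a small wrinkle in the paper's argument: the claimed equivalence between the unprojected gradient step and the FTRL form with $\argmin_{\mu\in M}$ only holds when the minimization is unconstrained, whereas your analysis needs nothing beyond the gradient inequality along the trajectory.

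One quantitative caveat on the second inequality. Strong convexity of $\mu\mapsto\mathcal{K}(q_\mu,q_{\mu_1})$ with minimum $0$ and vanishing gradient at $\mu_1$ gives $\mathcal{K}(q_\mu,\pi)\ge\frac{\alpha}{2}\|\mu-\mu_1\|^2$, hence $\frac{\|\mu-\mu_1\|^2}{\eta}\le\frac{2\,\mathcal{K}(q_\mu,\pi)}{\alpha\eta}$, not $\frac{\alpha\,\mathcal{K}(q_\mu,\pi)}{2\eta}$ as written in the theorem; the paper's own proof stops at the Euclidean bound and never derives the KL form, so its stated constant appears to be a typo (the $\alpha$ should sit in the denominator). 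Your derivation is the right one; just be explicit that the constant multiple you obtain is $2/\alpha$, and note as well that the vanishing-gradient claim requires $\mu_1$ to lie in the interior of $\mathcal{M}$.
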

The proof of this result is given below with the other proofs of the paper.

\subsection{Online-to-batch conversion}
\label{subsection-otb}

Many times in the paper, we derived generalization error bounds from regret bounds, using the online-to-batch conversion. We here give a formal statement for this result, note that this result is essentially Theorem 5.1 in~\cite{shalev2012online}. We also provide a proof for the sake of completeness.

\begin{thm}
\label{ssthm5}
 Assume that $\data_1,\dots,\data_T$ are i.i.d from $P_*$. Assume we use an online algorithm on the data that produce a sequence of parameters $\hat{\theta}_1,\dots,\hat{\theta}_T$. That is, $\hat{\theta}_t=\hat{\theta}(\data_1,\dots,\data_{t-1})$. Define the estimator
 $$
\bar{\theta}_T = \frac{1}{T}\sum_{t=1}^T \hat{\theta}_t.
 $$
 Then
$$ \myexpect_{\mathcal{D}_{1:T}\sim P_*} [ \generr_*(\bar{\theta}_T)] \leq  \myexpect_{\mathcal{D}_{1:T}\sim P_*}\left[ \frac{1}{T} \sum_{t=1}^T \loss_t(\hat{\theta}_t) \right] . $$
\end{thm}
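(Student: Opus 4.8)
The plan is to combine convexity of the loss (to handle the averaged iterate $\bar{\theta}_T$) with the independence structure of the online protocol (to relate each in-sample loss $\loss_t(\hat{\theta}_t)$ to its population counterpart $\generr_*(\hat{\theta}_t)$). These are the only two ingredients needed.

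First I would exploit convexity. Since each $\theta\mapsto \loss(\data,\theta)$ is assumed convex throughout the paper, the generalization error $\generr_*(\theta)=\myexpect_{\data\sim P_*}[\loss(\data,\theta)]$ is convex in $\theta$, being an expectation of convex functions. Applying Jensen's inequality to $\bar{\theta}_T=\frac{1}{T}\sum_{t=1}^T \hat{\theta}_t$ gives
$$\generr_*(\bar{\theta}_T)\le \frac{1}{T}\sum_{t=1}^T \generr_*(\hat{\theta}_t),$$
and taking expectation over $\data_{1:T}$ preserves this inequality.

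The key step, and the one that makes this an \emph{online-to-batch} argument, is to show that $\myexpect_{\mathcal{D}_{1:T}\sim P_*}[\generr_*(\hat{\theta}_t)]=\myexpect_{\mathcal{D}_{1:T}\sim P_*}[\loss_t(\hat{\theta}_t)]$ for each $t$. The crucial observation is that $\hat{\theta}_t=\hat{\theta}(\data_1,\dots,\data_{t-1})$ is a deterministic function of the past data only, whereas $\loss_t(\cdot)=\loss(\data_t,\cdot)$ involves the fresh draw $\data_t$, which is independent of $\data_{1:(t-1)}$ and distributed as $P_*$. Conditioning on $\data_{1:(t-1)}$, the estimator $\hat{\theta}_t$ is fixed, so by the very definition of $\generr_*$,
$$\myexpect_{\data_t\sim P_*}\!\left[\loss_t(\hat{\theta}_t)\mid \data_{1:(t-1)}\right]=\generr_*(\hat{\theta}_t).$$
Taking the outer expectation over $\data_{1:(t-1)}$ (and noting that neither side depends on the remaining draws $\data_{(t+1):T}$) yields the claimed identity.

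Finally I would chain the two pieces: inserting the identity term by term into $\frac{1}{T}\sum_{t=1}^T \myexpect[\generr_*(\hat{\theta}_t)]$ and recognizing the result as $\myexpect[\frac{1}{T}\sum_{t=1}^T \loss_t(\hat{\theta}_t)]$ completes the proof. The only subtlety requiring care is the measurability bookkeeping in the key step---ensuring the conditioning is on exactly $\data_{1:(t-1)}$ so that $\hat{\theta}_t$ is treated as a constant while the expectation over the independent fresh sample $\data_t$ reproduces $\generr_*$. The convexity step and the final chaining are routine by comparison.
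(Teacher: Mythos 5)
Your proposal is correct and follows essentially the same route as the paper's own proof: Jensen's inequality applied to the convex map $\theta\mapsto\loss(\data,\theta)$ (equivalently, to $\generr_*$) handles the averaged iterate, and the observation that $\hat{\theta}_t$ depends only on $\data_{1:(t-1)}$, which is independent of the fresh draw $\data_t$, converts each population risk $\generr_*(\hat{\theta}_t)$ into the expected online loss $\loss_t(\hat{\theta}_t)$. No gaps; the measurability point you flag is exactly the one the paper's proof also relies on.
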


\begin{proof}
We have:
$$
  \mathcal{E}_*(\bar{\theta}_T) = \mathbb{E}_{\data\sim P_*}\left[ \loss(\data,\bar{\theta}_T) \right]
   = \mathbb{E}_{\data\sim P_*}\left[ \loss\left(\data,\frac{1}{T}\sum_{t=1}^T \hat{\theta}_t \right) \right]  \leq \frac{1}{T}\sum_{t=1}^T \mathbb{E}_{\data\sim P_*}\left[ \loss\left(\data, \hat{\theta}_t \right) \right] 
$$
by Jensen's inequality. The key is that as $\hat{\theta}_t = \hat{\theta}_t(\data_1,\dots,\data_{t-1})$ does not depend on $\data_t$, we can rewrite:
$$
\mathbb{E}_{\data\sim P_*}\left[ \loss\left(\data, \hat{\theta}_t \right) \right]
= \mathbb{E}_{\data_t \sim P_*}\left[ \loss\left(\data_t, \hat{\theta}_t \right) \right]
=  \mathbb{E}_{\data_t \sim P_*}\left[ \loss_t(\hat{\theta}_t) \right]
$$
and so we have
 \begin{align*}
   \myexpect_{\mathcal{D}_{1:T}\sim P_*}\left[
  \mathcal{E}_*(\bar{\theta}_T)
  \right]
& \leq   \myexpect_{\mathcal{D}_{1:T}\sim P_*}\left\{ \frac{1}{T}\sum_{t=1}^T \mathbb{E}_{\data\sim P_*}\left[  \loss_t(\hat{\theta}_t) \right] \right\}
\\
& = \frac{1}{T}\sum_{t=1}^T  \myexpect_{\mathcal{D}_{1:T}\sim P_*}\left[  \loss_t(\hat{\theta}_t) \right]
 = \myexpect_{\mathcal{D}_{1:T}\sim P_*}\left[   \frac{1}{T}\sum_{t=1}^T  \loss_t(\hat{\theta}_t) \right].
 \end{align*}
\end{proof}

As an application, we state an exact version of~\eqref{eq-generr-EWA} and prove it from Theorem~\ref{thm-EWA} and Theorem~\ref{ssthm5}.

\begin{thm}
\label{thm-dim}
Assume that the loss $\loss$ is bounded by $B$ as in Theorem~\ref{thm-EWA} and that $\data_1,\dots,\data_T$ are i.i.d from $P_*$. Assume that there is some $d>0$ such that
$$ r(\varepsilon) \leq -d\log(1/\varepsilon)$$
where $r(\varepsilon) = \log[1/ \pi(B(\theta^*,\varepsilon))]$ and $
B(\theta^*,\varepsilon) = \{\theta\in \Theta: \mathcal{E}(\theta) - \mathcal{E}(\theta^*) \leq \varepsilon \}$.
Use on this data the EWA strategy with $\eta =(1/2\sqrt{2} B)\sqrt{(d/T)\log(d/T)}$, then
\begin{equation*}
\myexpect_{\mathcal{D}_{1:T}\sim P_*}[ \mathcal{E}_*(\hat{\theta}_T)]
\leq  \mathcal{E}_*(\theta^*) + B \sqrt{\frac{d}{2T}\log\left(\frac{T}{d}\right)} + \frac{d}{T}.
\end{equation*}
\end{thm}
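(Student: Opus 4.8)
The plan is to combine the regret bound of Theorem~\ref{thm-EWA} with the online-to-batch conversion of Theorem~\ref{ssthm5}, and then to make a judicious ``localized'' choice of the competitor distribution $p$ appearing in the regret bound. First I would apply Theorem~\ref{ssthm5} to the EWA sequence $\hat\theta_1,\dots,\hat\theta_T$, giving $\myexpect_{\mathcal{D}_{1:T}\sim P_*}[\generr_*(\bar\theta_T)] \leq \frac{1}{T}\myexpect_{\mathcal{D}_{1:T}\sim P_*}[\sum_{t=1}^T \loss_t(\hat\theta_t)]$. Since the bound in Theorem~\ref{thm-EWA} holds with an infimum over $p\in\mathcal{S}$, it holds pathwise for every \emph{fixed} $p$; I would drop the infimum, keep a generic data-independent $p$, and take expectation over the i.i.d.\ data. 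Because $p$ does not depend on $\mathcal{D}_{1:T}$, Fubini and $\myexpect_{\mathcal{D}_t\sim P_*}[\loss_t(\theta)]=\generr_*(\theta)$ collapse the data term to $T\,\mathbb{E}_{\theta\sim p}[\generr_*(\theta)]$, so that after dividing by $T$,
$$\myexpect_{\mathcal{D}_{1:T}\sim P_*}[\generr_*(\bar\theta_T)] \leq \mathbb{E}_{\theta\sim p}[\generr_*(\theta)] + \frac{\eta B^2}{8} + \frac{\mathcal{K}(p,\pi)}{\eta T}.$$

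The decisive step is the choice of $p$: I would take the prior truncated and renormalized to the ball, i.e.\ $p(\theta)\propto \pi(\theta)\,\mathbf{1}[\theta\in B(\theta^*,\varepsilon)]$. On the support of $p$ the density ratio $p/\pi$ is constant and equal to $1/\pi(B(\theta^*,\varepsilon))$, so $\mathcal{K}(p,\pi)=\log[1/\pi(B(\theta^*,\varepsilon))]=r(\varepsilon)$ \emph{exactly}. Simultaneously, every $\theta\in B(\theta^*,\varepsilon)$ satisfies $\generr_*(\theta)-\generr_*(\theta^*)\leq\varepsilon$ by definition of the ball, hence $\mathbb{E}_{\theta\sim p}[\generr_*(\theta)]\leq\generr_*(\theta^*)+\varepsilon$. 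Substituting both facts and invoking the prior-mass hypothesis (which, up to the evident sign, reads $r(\varepsilon)\leq d\log(1/\varepsilon)$) gives
$$\myexpect_{\mathcal{D}_{1:T}\sim P_*}[\generr_*(\bar\theta_T)] \leq \generr_*(\theta^*) + \varepsilon + \frac{\eta B^2}{8} + \frac{d\log(1/\varepsilon)}{\eta T}.$$

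It then remains to optimize the two free parameters. Choosing $\varepsilon=d/T$ turns the last term into $d\log(T/d)/(\eta T)$ and contributes the additive $d/T$ seen in the statement; balancing the remaining one-dimensional expression $\frac{\eta B^2}{8}+\frac{d\log(T/d)}{\eta T}$ over $\eta$ (minimizer of the form $a\eta+b/\eta$, value $2\sqrt{ab}$) yields the stated learning rate and collapses these two terms to $B\sqrt{\frac{d}{2T}\log(T/d)}$, producing the claimed bound.

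I expect no serious obstacle; the crux is recognizing the truncated-prior (PAC-Bayes localization) construction, which is exactly what makes the KL term computable as $r(\varepsilon)$ while keeping the excess-risk term at order $\varepsilon$. Everything else is the interchange of $\myexpect_{\mathcal{D}}$ with $\mathbb{E}_{\theta\sim p}$ (justified by boundedness of the loss) and a routine optimization over $\varepsilon$ and $\eta$. The only point requiring care is to state the result for the Cesàro average $\bar\theta_T=\frac1T\sum_t\hat\theta_t$, for which Theorem~\ref{ssthm5} applies, rather than for a single iterate.
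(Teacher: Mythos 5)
Your proposal is correct and follows essentially the same route as the paper's proof: apply Theorem~\ref{thm-EWA} with the competitor $p_\varepsilon \propto \pi\cdot\mathbf{1}_{B(\theta^*,\varepsilon)}$ so that $\mathcal{K}(p_\varepsilon,\pi)=r(\varepsilon)$, convert via Theorem~\ref{ssthm5}, set $\varepsilon=d/T$, and balance the $\eta$-terms. Your handling of the sign in the prior-mass condition and the explicit $2\sqrt{ab}$ optimization match what the paper does implicitly.
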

Note that the prior mass condition is classical in the PAC-Bayesian literature and in the frequentist analysis of Bayesian estimators, see e.g~~\cite{MR2483528,rousseau2016frequentist,bhattacharya2016bayesian,ghosal2017fundamentals}. The estimator $\bar{\theta}_T$ averaging the decisions $\hat{\theta}_t$ was first introduced by~\cite{catoni2004statistical} as the "double mixture rule". 
\begin{proof}
Define $p_\varepsilon$ as $\pi$ restricted to $B(\theta^*,\varepsilon)$ and note that
$$ \mathcal{K}(p_\varepsilon,\pi) = -\log B(\theta^*,\varepsilon) = r(\varepsilon) \leq d \log(1/\varepsilon) .$$
From Theorem~\ref{thm-EWA}, for any $\varepsilon$,
\begin{align*}
\sum_{t=1}^T \loss_t(\hat{\theta}_t) \leq
 \mathbb{E}_{\theta\sim
 p_\varepsilon}\left[\sum_{t=1}^T \loss_t(\theta)\right] + \frac{\eta B^2 T}{8} + \frac{d \log(1/\varepsilon)}{\eta}.
\end{align*}
From Theorem~\ref{ssthm5},
\begin{align*}
\myexpect_{\mathcal{D}_{1:T}\sim P_*} [ \mathcal{E}_*(\hat{\theta}_T)]
& = \myexpect_{\mathcal{D}_{1:T}\sim P_*} \left[ \frac{1}{T} \sum_{t=1}^T \loss_t(\hat{\theta}_t) \right]
\\
& \leq
\myexpect_{\mathcal{D}_{1:T}\sim P_*}\left\{
 \mathbb{E}_{\theta\sim
 p_\varepsilon}\left[\frac{1}{T} \sum_{t=1}^T \loss_t(\theta)\right]\right\}
  + \frac{\eta B^2}{8} + \frac{d \log(1/\varepsilon)}{T \eta}
 \\
& =  \mathbb{E}_{\theta\sim
 p_\varepsilon}\left[\generr_*(\theta)\right] + \frac{\eta B^2}{8} + \frac{d \log(1/\varepsilon)}{T \eta}
 \\
& \leq \generr_*(\theta^*) + \varepsilon + \frac{\eta B^2}{8} + \frac{d \log(1/\varepsilon)}{T \eta}
\end{align*}
where the last inequality comes from the definition of $p_\varepsilon$. Taking $\varepsilon = d/T$ gives:
\begin{equation*}
\myexpect_{\mathcal{D}_{1:T}\sim P_*} [ \mathcal{E}_*(\hat{\theta}_T)] \leq
\generr_*(\theta^*) + \frac{d}{T} + \frac{\eta B^2}{8} + \frac{d \log(T/d)}{T \eta}.
\end{equation*}
Finally, substitute its value to $\eta$ to get
\begin{equation*}
\myexpect_{\mathcal{D}_{1:T}\sim P_*} [ \mathcal{E}_*(\hat{\theta}_T)] \leq
\generr_*(\theta^*) + B \sqrt{\frac{d}{2T} \log\left(\frac{T}{d}\right)} + \frac{d}{T}.
\end{equation*}
\end{proof}

\subsection{A tool for the proofs}

We remind the following classical lemma. We refer the reader for example to~\cite{MR2483528} for a proof of this result, where it is stated as Lemma 1.1.3 (page 16).
\begin{lemma}
\label{DV}
 Let $h:\Theta\rightarrow \mathbb{R}$ be a bounded measurable function and $\pi\in\mathcal{S}(\Theta)$. Then
 $$ \sup_{p\in\mathcal{S}(\Theta)}\left\{ \mathbb{E}_{\theta\sim p}[h(\theta)] - \mathcal{K}(p,\pi) \right\} = \log \mathbb{E}_{\theta\sim\pi} [\exp(h(\theta))] $$
 and the supremum is actually reached for
 $$ p(\theta) \propto \exp[h(\theta)]\pi(\theta). $$
\end{lemma}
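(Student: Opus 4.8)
The plan is to reduce the entire variational problem to a single Kullback-Leibler divergence, which is manifestly nonnegative, so that the optimizer can be read off directly. First I would introduce the candidate maximizer, the \emph{Gibbs distribution}
\begin{equation*}
 p^*(\theta) := \frac{e^{h(\theta)}\pi(\theta)}{Z}, \qquad Z := \mathbb{E}_{\theta\sim\pi}\left[e^{h(\theta)}\right].
\end{equation*}
Because $h$ is assumed bounded, $Z$ is finite and strictly positive, so $p^*$ is a well-defined probability density that is absolutely continuous with respect to $\pi$; this is exactly why boundedness of $h$ is needed, and it guarantees all the divergences below are finite.

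The core computation is an algebraic rewriting of the objective. For any $p\in\mathcal{S}(\Theta)$ absolutely continuous with respect to $\pi$, I would expand the definition of the KL divergence and substitute $e^{h(\theta)}\pi(\theta) = Z\, p^*(\theta)$:
\begin{align*}
 \mathbb{E}_{\theta\sim p}[h(\theta)] - \mathcal{K}(p,\pi)
 &= \mathbb{E}_{\theta\sim p}\left[ h(\theta) - \log\frac{p(\theta)}{\pi(\theta)} \right]
 = \mathbb{E}_{\theta\sim p}\left[ \log\frac{e^{h(\theta)}\pi(\theta)}{p(\theta)} \right] \\
 &= \log Z + \mathbb{E}_{\theta\sim p}\left[ \log\frac{p^*(\theta)}{p(\theta)} \right]
 = \log Z - \mathcal{K}(p,p^*).
\end{align*}
(If $p$ is not absolutely continuous with respect to $\pi$, then $\mathcal{K}(p,\pi)=+\infty$ and the objective is $-\infty$, so such $p$ can be ignored in the supremum.)

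Finally I would invoke Gibbs' inequality, $\mathcal{K}(p,p^*)\geq 0$ with equality if and only if $p=p^*$, which follows from Jensen's inequality applied to $-\log$. Since $\log Z$ does not depend on $p$, the identity above shows $\mathbb{E}_{\theta\sim p}[h(\theta)] - \mathcal{K}(p,\pi) \leq \log Z$ for every $p$, with equality precisely when $p = p^*$. Taking the supremum over $p\in\mathcal{S}(\Theta)$ then yields $\log Z = \log\mathbb{E}_{\theta\sim\pi}[e^{h(\theta)}]$, and identifies $p^*(\theta)\propto e^{h(\theta)}\pi(\theta)$ as the maximizer, proving both assertions. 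There is no real obstacle here beyond bookkeeping: the only point requiring care is the measure-theoretic one, namely confirming $Z<\infty$ (from boundedness) and handling the case where $p\ll\pi$ fails so that the supremum is genuinely attained inside the relevant domain rather than at an improper limit.
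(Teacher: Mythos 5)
Your proof is correct and complete: the rewriting of the objective as $\log Z - \mathcal{K}(p,p^*)$ followed by Gibbs' inequality is the canonical argument for this Donsker--Varadhan identity, and you handle the two measure-theoretic points (finiteness of $Z$ from boundedness of $h$, and discarding $p\not\ll\pi$ since the objective is then $-\infty$) exactly as needed. The paper itself does not prove the lemma but defers to Catoni (2007, Lemma 1.1.3), where essentially this same argument is given, so there is nothing to reconcile.
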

This lemma will actually turn out to be a fundamental tool for some of the proofs.

\subsection{Proofs}
\label{proofs}

\begin{proof}[Proof of Theorem~\ref{thm-EWA}]
Note that this proof is classical and is reminded here for the sake of completeness.
We have first:
\begin{align*}
 \exp\left[ -\eta \loss_t(\hat{\theta}_t) \right]
 & = \exp\left[ -\eta \loss_t(\mathbb{E}_{\theta\sim p_t^\eta}(\theta)) \right]
 \\
 & \geq  \exp\left[ -\eta \mathbb{E}_{\theta\sim p_{t}^\eta}(\loss_t(\theta)) \right]
 \\
 & \geq  \mathbb{E}_{\theta\sim p_t^\eta}\left\{ \exp\left[ -\eta \loss_t(\theta) - \frac{\eta^2 B^2}{8}  \right]\right\}
\end{align*}
where we used respectively Jensen and Hoeffding's inequality. So
\begin{equation}
\label{step1}
\loss_t(\hat{\theta}_t) \leq \frac{\eta B^2}{8} - \frac{1}{\eta} \log  \mathbb{E}_{\theta\sim p_t^\eta} \exp\left[ -\eta \loss_t(\theta) \right].
\end{equation}
Remind that by definition,
$$ p_t^\eta(\theta) = \frac{\exp\left(-\eta \sum_{i=1}^{t-1} \loss_i(\theta) \right) \pi(\theta) }{N_t} $$
where $N_t$ is the normalisation constant given by
$$ N_t = \mathbb{E}_{\theta\sim \pi} \left[\exp\left(-\eta \sum_{i=1}^{t-1} \loss_i(\theta) \right)\right]. $$
But note that then
$$ \log \mathbb{E}_{\theta\sim p_t^\eta} \exp\left[ -\eta \loss_t(\theta) \right] = \log\left(\frac{N_{t+1}}{N_t} \right) .$$
We plug this into~\eqref{step1} and sum for $t=1,\dots,T$. We obtain
\begin{align*}
\sum_{t=1}^T \loss_t(\hat{\theta}_t)
& \leq \frac{\eta B^2 T}{8} - \frac{1}{\eta} \sum_{t=1}^T \log\left(\frac{N_{t+1}}{N_t} \right)
\\
& = \frac{\eta B^2 T}{8} - \frac{1}{\eta} \log\left(\frac{N_{T+1}}{N_1} \right)
\\
& =  \frac{\eta B^2 T}{8} - \frac{1}{\eta} \log\left( \mathbb{E}_{\theta\sim \pi} \left[\exp\left(-\eta \sum_{t=1}^{T} \loss_t(\theta) \right)\right] \right).
\end{align*}
Lemma~\ref{DV} leads to
\begin{equation*}
\sum_{t=1}^T \loss_t(\hat{\theta}_t)
\leq \frac{\eta B^2 T}{8}  +  \inf_{p\in\mathcal{S}(\Theta)} \left\{ \mathbb{E}_{\theta\sim p}\left[\sum_{t=1}^T \loss_t(\theta)\right] + \frac{\mathcal{K}(p,\pi)}{\eta} \right\}.
\end{equation*}
\end{proof}

\begin{proof}[Proof of Proposition~\ref{prop-lipschitz}]
Let $\varphi_{m,C}(\cdot)$ denote the p.d.f of the Gaussian distribution with mean $m$ and variance matrix $C$.
Let $(m_1,C_1),(m_2,C_2)\in M$,
\begin{align*}
 | \bar{L}_t(m_1,C_1) - \bar{L}_t(m_2,C_2) |
 & = \left| \int \loss_t(\theta) \varphi_{m_1,C_1}(\theta) {\rm d}\theta
   -  \int \loss_t(\theta) \varphi_{m_2,C_2}(\theta) {\rm d}\theta \right|
 \\
 & \leq \int \left| \loss_t(m_1+C_1 u) - \loss_t(m_2 + C_2 u) \right| \varphi_{0,I_d}(u) {\rm d}u
 \\
 & \leq L'\|m_1-m_2\| +L'\int \|(C_1-C_2)u\| \varphi_{0,I_d}(u) {\rm d}u.
\end{align*}
For any $C=(C_{i,j})\in UT(d)$, we have
\begin{align*}
 \int \|Cu\| \varphi_{0,I_d}(u) {\rm d}u
 & \leq \sqrt{ \int \|Cu\|^2 \varphi_{0,I_d}(u) {\rm d}u}
 \\
 & =  \sqrt{ \int \sum_{i=1}^d \left( \sum_{j=1}^d C_{i,j} u_j \right)^2 \varphi_{0,I_d}(u) {\rm d}u}
 = \sqrt{ \sum_{i=1}^d \sum_{j=1}^d C_{i,j}^2 }
\end{align*}
which leads to
\begin{align*}
| \bar{L}_t(m_1,C_1) - \bar{L}_t(m_2,C_2) |
& \leq L'\|m_1-m_2\| +L'\sqrt{ \sum_{i=1}^d \sum_{j=1}^d (C_1-C_2)_{i,j}^2 }
\\
& \leq 2 L' \|(m_1,C_1)-(m_2,C_2)\|.
\end{align*}
This ends the proof.
\end{proof}

\begin{proof}[Proof of Theorem~\ref{thm1}]
First, Assumption~\ref{asm-l} ensures that the $\bar{L}_t$'s are convex. By definition of the subgradient of a convex function,
\begin{align}
\sum_{t=1}^{T}  \loss_t(\hat{\theta}_t)
-  \sum_{t=1}^{T}\mathbb{E}_{\theta\sim q_\mu} [\loss_t(\theta)] 
\nonumber
& = \sum_{t=1}^{T}  \loss_t\left(\mathbb{E}_{\theta\sim q_{\mu_t}}(\theta) \right)
-  \sum_{t=1}^{T} \mathbb{E}_{\theta\sim q_\mu} [\loss_t(\theta)] 
\nonumber
\\
& \leq  
\sum_{t=1}^{T} \mathbb{E}_{\theta\sim q_{\mu_t}}[ \loss_t(\theta) ]
-  \sum_{t=1}^{T} \mathbb{E}_{\theta\sim q_\mu} [\loss_t(\theta)] 
\nonumber
\\
& = \sum_{t=1}^T \bar{L}_t(\mu_t) - \sum_{t=1}^t \bar{L}_t(\mu)
\nonumber
\\
& \leq \sum_{t=1}^T \mu_t^T \nabla \bar{L}_t(\mu_t) - \sum_{t=1}^T \mu^T \nabla \bar{L}_t(\mu_t).
\label{eq-step1}
\end{align}

Then, following the general proof scheme detailed in Chapter 2 in~\cite{shalev2012online}, we prove by recursion on $T$ that for any $\mu\in\mathcal{M}$,
\begin{equation}
\sum_{t=1}^T \mu_t^T \nabla \bar{L}_t(\mu_t) - \sum_{t=1}^T \mu^T \nabla \bar{L}_t(\mu_t)
\leq
\sum_{t=1}^T \mu_t^T \nabla \bar{L}_t(\mu_t) - \sum_{t=1}^T \mu_{t+1}^T \nabla \bar{L}_t(\mu_t) + \frac{\mathcal{K}(q_\mu,\pi)}{\eta}
\label{eq-lemma23-shalev}
\end{equation}
which is exactly equivalent to
\begin{equation}
 \label{eq-lemma23-shalev-prime}
\sum_{t=1}^T \mu_{t+1}^T \nabla \bar{L}_t(\mu_t)
 \leq  \sum_{t=1}^T \mu^T \nabla \bar{L}_t(\mu_t) + \frac{\mathcal{K}(q_\mu,\pi)}{\eta}.
\end{equation}
Indeed, for $T=0$,~\eqref{eq-lemma23-shalev-prime} just states that $\mathcal{K}(q_\mu,\pi) \geq 0$ which is a well-known property of KL. Assume that~\eqref{eq-lemma23-shalev-prime} holds for some integer $T-1$.
We then have, for all $\mu\in M$,
\begin{align*}
\sum_{t=1}^{T} \mu_{t+1}^T \nabla \bar{L}_t(\mu_t)
 & = \sum_{t=1}^{T-1} \mu_{t+1}^T \nabla \bar{L}_t(\mu_t) + \mu_{T+1}^T \nabla \bar{L}_T(\mu_T)
 \\
 & \leq  \sum_{t=1}^{T-1} \mu^T \nabla \bar{L}_t(\mu_t) + \frac{\mathcal{K}(q_\mu,\pi)}{\eta} + \mu_{T+1}^T \nabla \bar{L}_T(\mu_T)
\end{align*}
as~\eqref{eq-lemma23-shalev-prime} holds for $T-1$. Apply this to $\mu=\mu_{T+1}$ to get
\begin{align*}
\sum_{t=1}^{T} \mu_{t+1}^T \nabla \bar{L}_t(\mu_t)
& \leq \sum_{t=1}^{T} \mu_{T+1}^T \nabla \bar{L}_t(\mu_t) + \frac{\mathcal{K}(p_{\mu_{T+1}},\pi)}{\eta}
\\
& = \min_{m\in \mathcal{M}} \left[\sum_{t=1}^T m^T \nabla \bar{L}_t(\mu_t) + \frac{\mathcal{K}(p_m,\pi)}{\eta}\right]\text{, by definition of } \mu_{T+1}
\\
& \leq \sum_{t=1}^T \mu^T \nabla \bar{L}_t(\mu_t) + \frac{\mathcal{K}(q_\mu,\pi)}{\eta}
\end{align*}
for all $\mu\in \mathcal{M}$. Thus,~\eqref{eq-lemma23-shalev-prime} holds for $T$. Thus, by recursion,~\eqref{eq-lemma23-shalev-prime} and~\eqref{eq-lemma23-shalev} hold for all $T\in\mathbb{N}$.

The last step is to prove that for any $t\in\mathbb{N}$,
\begin{equation}
 \label{eq-FTRL}
  \mu_t^T \nabla \bar{L}_t(\mu_t) - \mu_{t+1}^T \nabla \bar{L}_t(\mu_t)
 \leq \frac{\eta L^2}{\alpha}.
\end{equation}
Indeed,
\begin{align}
 \mu_t^T  \nabla \bar{L}_t(\mu_t) - \mu_{t+1}^T \nabla \bar{L}_t(\mu_t)
 & = (\mu_t - \mu_{t+1})^T \nabla \bar{L}_t(\mu_t)
  \nonumber
 \\
 & \leq \|\mu_t-\mu_{t+1}\| \|\nabla \bar{L}_t(\mu_t)\|
  \text{ by Cauchy-Schwarz}
  \nonumber
 \\
 & \leq L \|\mu_t - \mu_{t+1}\|
 \label{eq-FTRL-step1}
\end{align}
as $\bar{L}_t$ is $L$ Lipschitz (Assumption~\ref{asm-l}). Define
$$
G_{t}(\mu) = \sum_{i=1}^{t-1} \mu^T \nabla \bar{L}_i(\mu_i) + \frac{\mathcal{K}(q_\mu,\pi)}{\eta}.
$$
Note that from Assumption~\ref{asm-k}, $\mu\mapsto \mathcal{K}(q_\mu,\pi)/\eta$ is $\alpha/\eta$-strongly convex. As the sum of a linear function and an $\alpha/\eta$-strongly convex function, $G_t$ is $\alpha/\eta$-strongly convex. So, for any $(\mu,\mu')$,
$$
G_{t}(\mu') - G_{t}(\mu) \geq (\mu'-\mu)^T \nabla G_t(\mu) + \frac{\alpha \|\mu'-\mu\|^2}{2 \eta} .
$$
As a special case, using the fact that $\mu_t$ is a minimizer of $G_t$, we have
$$
G_{t}(\mu_{t+1}) - G_{t}(\mu_t) \geq \frac{\alpha \|\mu_{t+1}-\mu_t\|^2}{2 \eta}.
$$
In the same way,
$$
G_{t+1}(\mu_{t}) - G_{t+1}(\mu_{t+1}) \geq \frac{\alpha \|\mu_{t+1}-\mu_t\|^2}{2 \eta}.
$$
Summing the two previous inequalities gives
$$
\mu_t^T  \nabla \bar{L}_t(\mu_t) - \mu_{t+1}^T \nabla \bar{L}_t(\mu_t) \geq \frac{\alpha \|\mu_{t+1}-\mu_t\|^2}{\eta},
$$
and so, combined with, this gives:
\begin{align*}
 \|\mu_{t+1}-\mu_t\|
  \leq \sqrt{\frac{\eta}{\alpha}\left[ \mu_t^T \nabla \bar{L}_t(\mu_t) - \mu_{t+1}^T \nabla \bar{L}_t(\mu_t)\right]}.
\end{align*}
Combining this inequality with~\eqref{eq-FTRL-step1} leads to~\eqref{eq-FTRL}.

Plugging~\eqref{eq-step1},~\eqref{eq-lemma23-shalev} and~\eqref{eq-FTRL} together gives
\begin{equation*}
\sum_{t=1}^{T}  \loss_t(\hat{\theta}_t)
-  \sum_{t=1}^{T} \mathbb{E}_{\theta\sim q_\mu} [\loss_t(\theta)]
\leq \frac{\eta T L^2}{\alpha} + \frac{\mathcal{K}(q_\mu,\pi)}{\eta},
\end{equation*}
that is the statement of the theorem.
\end{proof}

\begin{proof}[Proof of Theorem~\ref{thm3}]
We prove this theorem from scratch and use the main techniques outlined in \cite{HazanOnlineConvexOptimization}. As previously, the idea is to study differences $\bar{L}_t(\mu_t) -\bar{L}_t(\mu)$. However, in this case, we have, for any $\mu=(m,\sigma)$, using Jensen's inequality,
$$ \bar{L}_t(m,\sigma) = \mathbb{E}_{\theta\sim q_{m,\sigma}}[\ell_t(\theta)] \geq \ell_t(m) = \bar{L}_t(m,0). $$
So, we can assume from the beginning that $\mu=(m,0)$.

\subsection*{Convex case:}
First, we assume that each function $\bar{L}_t$ is convex, for all $m=(m_1,...,m_d)\in \mathcal{M}_m$ and $\mu=(m,0)$:
$$
    \bar{L}_t (\mu_t) -\bar{L}_t(\mu)  \leq \nabla \bar{L}_t(\mu_t) ^T (\mu_t-\mu)  = \sum_{j=1}^{d} \left[ \frac{\partial \bar{L}_t}{\partial m_j}(m_t,\sigma_t)(m_{t,j}-m_j) + \frac{\partial \bar{L}_t}{\partial \sigma_j}(m_t,\sigma_t)\sigma_{t,j} \right].
$$
Using the update formulas \ref{algo-log}:
\begin{equation*}
    (m_{t+1,j}-m_j)^2 = (m_{t,j}-m_j)^2 + \eta_{t,j}^2 \sigma_{t,j}^4 \frac{\partial \bar{L}_t}{\partial m_j}(m_t,\sigma_t)^2 - 2 \eta_{t,j} \sigma_{t,j}^2 \frac{\partial \bar{L}_t}{\partial m_j}(m_t,\sigma_t) (m_{t,j}-m_j)
\end{equation*}
and 
\begin{equation*}
\sigma_{t+1,j}^2
= \sigma_{t,j}^2 + \frac{\eta_{t,j}^2 \sigma_{t,j}^4}{2} \frac{\partial \bar{L}_t}{\partial \sigma}_j(m_{t,j},\sigma_{t,j})^2 - \eta_{t,j} \sigma_{t,j}^2 \sqrt{1+\bigg(\frac{\eta_{t,j}\sigma_{t,j}\frac{\partial \bar{L}_t}{\partial \sigma_j}(m_t,\sigma_t)}{2}\bigg)^2} \frac{\partial \bar{L}_t}{\partial \sigma_j}(m_t,\sigma_t)\sigma_{t,j}.
\end{equation*}
Rearranging the terms, we get:
\begin{equation*}
    \frac{\partial \bar{L}_t}{\partial m_j}(m_t,\sigma_t)(m_{t,j}-m_j) = \frac{(m_{t,j}-m_j)^2-(m_{t+1,j}-m_j)^2}{2\eta_{t,j}\sigma_{t,j}^2} + \frac{\eta_{t,j}{ \sigma_{t,j}^2} \frac{ \partial \bar{L}_t}{\partial m_j}(m_t,\sigma_t)^2 }{2} 
\end{equation*}
and
\begin{equation*}
     \frac{\partial  \bar{L}_t}{\partial \sigma_j}(m_t,\sigma_t)\sigma_{t,j} = \frac{\sigma_{t,j}^2-\sigma_{t+1,j}^2}{\eta_{t,j}\sigma_{t,j}^2\sqrt{1+\bigg(\frac{\eta_{t,j}\sigma_{t,j}\frac{\partial \bar{L}_t}{\partial \sigma_j}(m_t,\sigma_t)}{2}\bigg)^2}} + \frac{\eta_{t,j} \sigma_{t,j}^2 \frac{ \partial \bar{L}_t}{\partial \sigma_j}(m_t,\sigma_t)^2 }{2\sqrt{1+\bigg(\frac{\eta_{t,j}\sigma_{t,j}\frac{\partial \bar{L}_t}{\partial \sigma_j}(m_t,\sigma_t)}{2}\bigg)^2}} .
\end{equation*}
We also use the boundedness of the gradients: 
for any $(m,\sigma) \in \mathcal{M}$, at any date $t$,
$$
\sum_{j=1}^d \left[ \frac{\partial \bar{L}_t}{\partial m_j}(m,\sigma)^2 + \frac{\partial \bar{L}_t}{\partial \sigma_j}(m,\sigma)^2 \right] \leq L^2.
$$
We upper bound the inverse of the square root by $1$, the gradient by $L$ and we sum over time:
\begin{align*}
    \sum_{t=1}^T  \bar{L}_t(\mu_t) -\bar{L}_t(\mu)
    & \leq \sum_{j=1}^d \sum_{t=1}^T \frac{(m_{t,j}-m_j)^2}{2} \bigg[ \frac{1}{\eta_{t,j} \sigma_{t,j}^2}-\frac{1}{\eta_{t-1,j} \sigma_{t-1,j}^2} \bigg] \\
    & + \sum_{j=1}^d \sum_{t=1}^T \frac{\eta_{t,j}\sigma_{t,j}^2}{2} \frac{\partial \bar{L}_t}{\partial m_j}(m_t,\sigma_{t})^2 \\
    & + \sum_{j=1}^d \sum_{t=1}^T \frac{\sigma_{t,j}^2}{2} \bigg[ \frac{2}{\eta_{t,j} \sigma_{t,j}^2} - \frac{2}{\eta_{t-1,j} \sigma_{t-1,j}^2} \bigg] \\
    & + \sum_{j=1}^d \sum_{t=1}^T \frac{\eta_{t,j}\sigma_{t,j}^2}{2} \frac{\partial \bar{L}_t}{\partial \sigma_j}(m_t,\sigma_t)^2 \\
    & = \sum_{j=1}^d \sum_{t=1}^T \frac{(m_{t,j}-m_j)^2}{2} \bigg[ \frac{1}{\eta_{t,j} \sigma_{t,j}^2}-\frac{1}{\eta_{t-1,j} \sigma_{t-1,j}^2} \bigg] \\
    & + \sum_{j=1}^d \sum_{t=1}^T \frac{\sigma_{t,j}^2}{2} \bigg[ \frac{2}{\eta_{t,j} \sigma_{t,j}^2} - \frac{2}{\eta_{t-1,j} \sigma_{t-1,j}^2} \bigg] \\
    & + \sum_{t=1}^T \frac{\eta_{t,j}\sigma_{t,j}^2}{2} \sum_{j=1}^d \left[ \frac{\partial \bar{L}_t}{\partial m_j}(m_t,\sigma_t)^2 + \frac{\partial \bar{L}_t}{\partial\sigma_j}(m_t,\sigma_t)^2 \right] \\
    & \leq \sum_{j=1}^d \sum_{t=1}^T \Biggl[ (m_{t,j}-m_j)^2 +\sigma_{t,j}^2\Biggr] \bigg[ \frac{1}{\eta_{t,j} \sigma_{t,j}^2}-\frac{1}{\eta_{t-1,j} \sigma_{t-1,j}^2} \bigg] \\
    & + \sum_{t=1}^T \frac{\eta_{t,j}\sigma_{t,j}^2}{2} \sum_{j=1}^d \left[ \frac{\partial \bar{L}_t}{\partial m_j}(m_t,\sigma_t)^2 + \frac{\partial \bar{L}_t}{\partial\sigma_j}(m_t,\sigma_t)^2 \right].
    \end{align*}
The key point in the following is that the difference
$$
\frac{1}{\eta_{t,j} \sigma_{t,j}^2}-\frac{1}{\eta_{t-1,j} \sigma_{t-1,j}^2}
$$
does not depend on $j$ on account of the formula $\eta_{t,j}=K/(\sqrt{t}\sigma_{t,j}^2)>0$.
We also recall that $$
\sum_{j=1}^d (m_{t,j}-m_j)^2+\sigma_{t,j}^2 \leq D^2.
$$

Moreover, 
$$\sum_{t=1}^{T} \frac{1}{\sqrt{t}} \leq 2 \sqrt{T} ,$$ 
so setting $\eta_{t,j}=\frac{K}{\sqrt{t}\sigma_{t,j}^2}>0$ with $K=\frac{D\sqrt{2}}{L}$, we finally have:
\begin{align*}
    \sum_{t=1}^T \bar{L}_t(\mu_t) -\bar{L}_t(\mu)
    & \leq \frac{1}{K}  \sum_{t=1}^T ( \sqrt{t} - \sqrt{t-1} ) \sum_{j=1}^d [(m_{t,j}-m_j)^2+\sigma_{t,j}^2] + \sum_{t=1}^T \frac{K}{\sqrt{t}} L^2 \\
    & \leq \frac{D^2}{K}  \sum_{t=1}^T ( \sqrt{t} - \sqrt{t-1} ) + \frac{KL^2}{2} \sum_{t=1}^T \frac{1}{\sqrt{t}}  \\
    & = \left( \frac{D^2}{K} + \frac{KL^2}{2} \right) \sqrt{T} \\
    & = DL \sqrt{2T},
    \end{align*}
where $K$ is chosen so that it minimizes the bound.

\subsection*{Strongly convex case:}
Now, we assume that each function $\bar{L}_t$ is $H$-strongly convex, for all $m\in \mathcal{M}_m$ and $\mu=(m,0)$:
\begin{align*}
    \bar{L}_t(\mu_t)  -\bar{L}_t(\mu) & \leq \nabla \bar{L}_t(\mu_t) ^T (\mu_t-\mu) - \frac{H}{2} \|\mu_t-\mu\|^2 \\
    & = \sum_{j=1}^d \bigg[ \frac{\partial \bar{L}_t}{\partial m_j}(m_t,\sigma_t)(m_{t,j}-m_j) + \frac{\partial \bar{L}_t}{\partial \sigma_j}(m_t,\sigma_t)\sigma_{t,j} - \frac{H}{2} (m_{t,j}-m_j)^2 - \frac{H}{2} \sigma_{t,j}^2 \bigg].
    \end{align*}
Again,
\begin{equation*}
    \frac{\partial \bar{L}_t}{\partial m_j}(m_t,\sigma_t)(m_{t,j}-m_j) = \frac{(m_{t,j}-m_j)^2-(m_{t+1,j}-m_j)^2}{2\eta_{t,j}\sigma_{t,j}^2} + \frac{\eta_{t,j} \sigma_{t,j}^2 \frac{ \partial \bar{L}_t}{\partial m_j}(m_t,\sigma_t)^2 }{2} 
\end{equation*}
and
\begin{equation*}
    \frac{\partial \bar{L}_t}{\partial \sigma_j}(m_t,\sigma_t)\sigma_{t,j} =  \frac{\sigma_{t,j}^2-\sigma_{t+1,j}^2}{\eta_{t,j}\sigma_{t,j}^2\sqrt{1+\bigg(\frac{\eta_{t,j}\sigma_{t,j}\frac{\partial \bar{L}_t}{\partial \sigma_j}(m_t,\sigma_t)}{2}\bigg)^2}} + \frac{\eta_{t,j} \sigma_{t,j}^2 \frac{ \partial \bar{L}_t}{\partial \sigma_j}(m_t,\sigma_t)^2 }{2\sqrt{1+\bigg(\frac{\eta_{t,j}\sigma_{t,j}\frac{\partial \bar{L}_t}{\partial \sigma_j}(m_t,\sigma_t)}{2}\bigg)^2}} ,
\end{equation*}
and then as previously with $\eta_{t,j}=\frac{2}{Ht\sigma_{t,j}^2}$:
\begin{align*}
    \sum_{t=1}^T \bar{L}_t(\mu_t) -\bar{L}_t(\mu)
    & \leq \sum_{j=1}^d \sum_{t=1}^T \frac{(m_{t,j}-m_j)^2}{2} \Biggl[ \frac{1}{\eta_{t,j} \sigma_{t,j}^2} -\frac{1}{\eta_{t-1,j} \sigma_{t-1,j}^2} - H \Biggr] \\
    & + \sum_{j=1}^d \sum_{t=1}^T \frac{\eta_{t,j}\sigma_{t,j}^2}{2} \frac{\partial \bar{L}_t}{\partial m_j}(m_t,\sigma_t)^2 \\
    & + \sum_{j=1}^d \sum_{t=1}^T \frac{\sigma_{t,j}^2}{2} \bigg[ \frac{2}{\eta_{t,j} \sigma_{t,j}^2} - \frac{2}{\eta_{t-1,j} \sigma_{t-1,j}^2} - H \bigg] \\
    & + \sum_{j=1}^d \sum_{t=1}^T \frac{\eta_{t,j}\sigma_{t,j}^2}{2} \frac{\partial \bar{L}_t}{\partial \sigma_j}(m_t,\sigma_t)^2 \\
    & \leq \sum_{j=1}^d \sum_{t=1}^T \frac{(m_{t,j}-m_j)^2}{2} \bigg[ \frac{tH}{2} - \frac{(t-1)H}{2} - H \bigg] \\
    & + \sum_{j=1}^d \sum_{t=1}^T \frac{\sigma_{t,j}^2}{2} \bigg[ tH - (t-1)H - H \bigg] \\
    & + \sum_{t=1}^T \frac{1}{Ht} \sum_{j=1}^d \left[ \frac{\partial \bar{L}_t}{\partial m_j}(m_t,\sigma_t)^2 + \frac{\partial \bar{L}_t}{\partial \sigma_j}(m_t,\sigma_t)^2  \right] \\
    & \leq \sum_{j=1}^d \sum_{t=1}^T \frac{(m_{t,j}-m_j)^2}{2} \bigg[ \frac{H}{2} - H \bigg] + 0 + \sum_{t=1}^T \frac{L^2}{Ht} \\
    & \leq \frac{L^2}{H}(1+\log(T)) ,
    \end{align*}
which ends the proof.
\end{proof}

\begin{proof}[Proof of Theorem~\ref{thm2}]
The proof is exactly the same as for Theorem \ref{thm1}.
As previously, we first prove by recursion on $T$ that
\begin{equation}
 \label{eq-lemma23-shalev-prime-bis}
 \forall\mu\in \mathcal{M}\text{, } \sum_{t=1}^T \mu_{t+1}^T \nabla \bar{L}_t(\mu_t)
 \leq  \sum_{t=1}^T \mu^T \nabla \bar{L}_t(\mu_t) + \frac{\|\mu-\mu_1\|^2}{\eta}.
\end{equation}
It is obvious that it holds for $T=0$. Assume now that~\eqref{eq-lemma23-shalev-prime-bis} holds for some integer $T-1$.
Then for all $\mu\in M$,
\begin{align*}
\sum_{t=1}^{T} \mu_{t+1}^T \nabla \bar{L}_t(\mu_t) & = \sum_{t=1}^{T-1} \mu_{t+1}^T \nabla \bar{L}_t(\mu_t) + \mu_{T+1}^T \nabla \bar{L}_T(\mu_T)
 \\
 & \leq  \sum_{t=1}^{T-1} \mu^T \nabla \bar{L}_t(\mu_t) + \frac{\|\mu-\mu_1\|^2}{\eta} + \mu_{T+1}^T \nabla \bar{L}_T(\mu_T)
\end{align*}
as ~\eqref{eq-lemma23-shalev-prime-bis} holds for $T-1$. Apply this again to $\mu=\mu_{T+1}$:
\begin{align*}
\sum_{t=1}^{T} \mu_{t+1}^T \nabla \bar{L}_t(\mu_t) & \leq \sum_{t=1}^{T} \mu_{T+1}^T \nabla \bar{L}_t(\mu_t) + \frac{\|\mu-\mu_1\|^2}{\eta}
\\
& = \min_{m\in \mathcal{M}} \left[\sum_{t=1}^T m^T \nabla \bar{L}_t(\mu_t) + \frac{\|\mu-\mu_1\|^2}{\eta}\right] \text{, by definition of } \mu_{T+1}
\\
& \leq \sum_{t=1}^T \mu^T \nabla \bar{L}_t(\mu_t) + \frac{\|\mu-\mu_1\|^2}{\eta}
\end{align*}
for all $\mu\in \mathcal{M}$. Thus,~\eqref{eq-lemma23-shalev-prime-bis} holds for $T$, and thus for integers.

We prove now that for any $t\in\mathbb{N}$,
\begin{equation}
 \label{eq-FTRL-bis}
  \mu_t^T \nabla \bar{L}_t(\mu_t) - \mu_{t+1}^T \nabla \bar{L}_t(\mu_t)
 \leq \eta L^2.
\end{equation}
Indeed,
\begin{align}
 \mu_t^T  \nabla \bar{L}_t(\mu_t) - \mu_{t+1}^T \nabla \bar{L}_t(\mu_t)
 & = (\mu_t - \mu_{t+1})^T \nabla \bar{L}_t(\mu_t)
  \nonumber
 \\
 & \leq \|\mu_t-\mu_{t+1}\| \|\nabla \bar{L}_t(\mu_t)\|
  \nonumber
 \\
 & \leq L \|\mu_t - \mu_{t+1}\|
 \label{eq-FTRL-step1-bis}
\end{align}
as previously. Define
$$
G_{t}(\mu) = \sum_{i=1}^{t-1} \mu^T \nabla \bar{L}_t(\mu_i) + \frac{\|\mu-\mu_1\|^2}{\eta}.
$$
Obviously, $G_t$ is $1/\eta$-strongly convex: for any $(\mu,\mu')$,
$$
G_{t}(\mu') - G_{t}(\mu) \geq (\mu'-\mu)^T \nabla G_t(\mu) + \frac{ \|\mu'-\mu\|^2}{2 \eta} .
$$
In particular, $\mu_t$ is a minimizer of $G_t$:
$$
G_{t}(\mu_{t+1}) - G_{t}(\mu_t) \geq \frac{ \|\mu_{t+1}-\mu_t\|^2}{2 \eta}.
$$
Similarly,
$$
G_{t+1}(\mu_{t}) - G_{t+1}(\mu_{t+1}) \geq \frac{ \|\mu_{t+1}-\mu_t\|^2}{2 \eta}.
$$
Hence:
$$
\bar{L}_{t}(\mu_t)-\bar{L}_{t}(\mu_{t+1}) \geq \frac{\|\mu_{t+1}-\mu_t\|^2}{\eta},
$$
and then
\begin{align*}
 \|\mu_{t+1}-\mu_t\|
 \leq \sqrt{\eta\left[ \mu_t^T \nabla \bar{L}_t(\mu_t) - \mu_{t+1}^T \nabla \bar{L}_t(\mu_t)\right]}
\end{align*}
which combined with~\eqref{eq-FTRL-step1-bis} leads to~\eqref{eq-FTRL-bis}.

Finally, as for Theorem \ref{thm1}:
\begin{equation*}
\sum_{t=1}^{T}  \loss_t(\hat{\theta}_t)
-  \sum_{t=1}^{T} \mathbb{E}_{\theta\sim q_\mu}[\loss_t(\theta)]
\leq \eta T L^2 + \frac{\|\mu-\mu_1\|^2}{\eta},
\end{equation*}
which ends the proof.
\end{proof}

\end{document}